\newcommand{\acksection}{\section*{Acknowledgments and Disclosure of Funding}}
\newtheorem{lemma}{Lemma}
\DeclareMathOperator{\Aut}{Aut}
\DeclareMathOperator{\Refx}{Ref}
\DeclareMathOperator{\CRefx}{CR}
\DeclareMathOperator{\TRefx}{TR}
\DeclareMathOperator{\ORefx}{OR}
\DeclareMathOperator{\CTRefx}{CTR}
\DeclareMathOperator{\IRNI}{IRNI}
\let\Sel\undefined
\DeclareMathOperator{\Sel}{Sel}
\DeclareMathOperator{\combine}{combine}
\DeclareMathOperator{\aggregate}{aggregate}
\DeclareMathOperator{\IRNIx}{IRNI}
\newcommand{\bx}{\mathbf{x}}
\newtheorem{theorem}[lemma]{Theorem}
\title{A systematic approach to random data augmentation on graph neural networks}
\author{%
  Billy Joe Franks \\
  Department of Computer Science\\
  Technische Universit{\"a}t Kaiserslautern\\
  67663 Kaiserslautern, Germany\\
  \texttt{franks@cs.uni-kl.de} \\
  \and
  Markus Anders \\
  Department of Mathematics\\
  Technische Universit{\"a}t Darmstadt\\
  64289 Darmstadt, Germany\\
  \texttt{anders@mathematik.tu-darmstadt.de} \\
  \and
  Marius Kloft \\
  Department of Computer Science\\
  Technische Universit{\"a}t Kaiserslautern\\
  67663 Kaiserslautern, Germany\\
  \texttt{kloft@cs.uni-kl.de} \\
  \and
  Pascal Schweitzer \\
  Department of Mathematics\\
  Technische Universit{\"a}t Darmstadt\\
  64289 Darmstadt, Germany\\
  \texttt{schweitzer@mathematik.tu-darmstadt.de}\\
}
\begin{document}
\maketitle
\begin{abstract}
Random data augmentations (RDAs) are state of the art regarding practical graph neural networks that are provably universal.
There is great diversity regarding terminology, methodology,  benchmarks, and evaluation metrics used among existing RDAs. Not only does this make it increasingly difficult for practitioners to decide which technique to apply to a given problem, but it also stands in the way of systematic improvements. 
We propose a new comprehensive framework that captures all previous RDA techniques. On the theoretical side, among other results, we formally prove that under natural conditions all instantiations of our framework are universal. On the practical side, we develop a method to systematically and automatically train RDAs. This in turn enables us to impartially and objectively compare all
existing RDAs. New RDAs naturally emerge from our approach, and our experiments demonstrate that they improve the state of the art.
\end{abstract}

\section{Introduction}\label{sec:intro}
Structured data is omnipresent in the modern world. Graphs are a fundamental data type in machine learning (ML) on structured data and used in a variety of learning algorithms \citep{DBLP:journals/jmlr/ShervashidzeSLMB11, nickel2015review, DBLP:journals/corr/abs-1709-05584}, including graph neural networks (GNNs) \citep{zhou2020graph, wu2020comprehensive}.
GNNs, and especially message passing neural networks (MPNNs) \citep{gilmer2017neural}, have found wide-spread applications, from drug design to friendship recommendation in social networks \citep{song2019session, tang2020self}.
However, severe limitations of MPNNs have recently been proven formally. In fact, MPNNs are at most as expressive as \emph{color refinement}~\citep{xu2018how,DBLP:conf/aaai/0001RFHLRG19}. Color refinement --- also known as the 1-dimensional Weisfeiler-Leman algorithm --- is a simple algorithm inducing a \emph{non}-universal similarity measure on graphs. Thus MPNNs fail to be universal function approximators, a fundamental property known for multilayer perceptrons since the 1980s \citep{cybenko1989approximation}.
 
Addressing this lack, random data augmentations (RDA) \citep{murphy2019relational, ijcai2020-294, sato2020random, abboud2020surprising} were developed, which provably enable MPNNs to be universal function approximators. RDAs augment the nodes of the graph with random values, thus artificially breaking its intrinsic symmetries and facilitating the distinction of previously indistinguishable nodes. 
RDAs represent the state of the art in practical, efficient, and universal GNNs.
Recently, numerous variations for RDAs have been developed \citep{murphy2019relational, ijcai2020-294, sato2020random}. These are all universal in theory, but the great diversity makes it difficult to judge and compare the practical efficacy and efficiency of the various approaches. This applies to many aspects of the overall systems. For example it is unclear how the trainability, generalizability, and expressivity \cite{raghu2017expressive} of different architectures compare to one another.

There are several reasons for this incomparability. The literature in particular uses inconsistent terminology, differing or incomplete data sets for benchmarking, and orthogonal evaluation metrics. 
Not only does this make it increasingly difficult for practitioners to decide which technique to apply to a given problem, but it also stands in the way of systematically improving existing techniques. 
In any case, there is no clear understanding of how RDAs are even related beyond sharing the use of randomization in their data augmentation.

\paragraph{Our contributions.}
We propose the individualization refinement node initialization (IRNI) framework.
The foundation for our framework is formed by the theory of practical graph isomorphism solvers, i.e., the individualization-refinement paradigm.
We supplement the framework with a systematic approach to automatically train IRNI-based GNNs using bayesian hyperparameter optimization.
This in turn enables us to impartially and objectively compare different variations.
Armed with this new systematic approach, we make the following contributions:
\begin{itemize}
  \item We demonstrate that our framework IRNI is comprehensive. 
  In fact, we show that all previous random data augmentation techniques are captured.
  
  \item We formally prove that all instantiations of the IRNI framework that satisfy a natural compatibility condition are universal and equivariant, even ones not considered so far. Furthermore, we prove that already a very limited version of IRNI is universal on almost all graphs, including all 3-connected planar ones. We also quantify the amount of randomness required to ensure universality of all but an exponentially small fraction of all graphs.

  \item The framework encourages transcending ideas across RDAs: we apply and systematically test ensembling over randomness. This is a particular form of ensembling that previously has been applied in some but not all RDAs \citep{murphy2019relational, ijcai2020-294}. Our crucial new insight is that ensembling over randomness significantly improves the performance of all RDAs, even in cases in which it had not been considered previously.
  The framework also suggests a new most natural RDA directly related to practical graph isomorphism solvers. Many more methods can be developed effortlessly, especially methods that mix existing approaches.

  \item Our systematic approach enables a direct comparison of all the different RDAs and leads to multiple new state-of-the-art performances.
   Additionally, we discover that even though random node initialization (RNI) was reported to be harder to train than other RDAs \citep{abboud2020surprising}, our framework resolves this issue.
 
\end{itemize}

In conclusion, and somewhat surprisingly, we find that currently, there seems to be no clear best strategy in practice. There are strengths and weaknesses to all considered RDAs. However, the use of this systematic approach allows for the direct comparison of the different RDAs and thus the informed choice of which method to use in practice.

\paragraph{Why individualization refinement.} We explain why the IR-framework is a natural choice for the development of efficient universal GNNs. (For general strengths and weaknesses beyond ML see \citep{McKay201494, DBLP:conf/esa/NeuenS17}.)
First of all, the introduction of graph isomorphism techniques in the context of machine learning on graphs already led to two success stories, namely the efficient Weisfeiler-Leman (WL) Kernels~\citep{DBLP:journals/jmlr/ShervashidzeSLMB11} based on color refinement (1-WL) and the more theoretical higher-order graph neural networks~\citep{DBLP:conf/aaai/0001RFHLRG19} based on higher-dimensional WL.

However, when it comes to practical graph isomorphism, the use of the 2-dimensional WL is already prohibitive. This is not only due to excessive running time but also due to excessive memory consumption.
In the world of isomorphism testing, higher-dimensional Weisfeiler-Leman algorithms remain on the theoretical side. In truth, without fail, modern solvers are IR algorithms \citep{McKay81practicalgraph, DBLP:conf/tapas/JunttilaK11, McKay201494, anders2020engineering}. They only use color refinement and instead of higher-dimensional WL rather use individualizations to achieve universality. In contrast to higher-dimensional WL, the IR approach is generic, universal, \emph{and} practical.
Uncontested for more than 50 years, IR algorithms have the fastest computation times and acceptable memory consumption for graph isomorphism~\citep{McKay201494}. In that sense, the IRNI approach we introduce in this paper is the first time in which universal graph isomorphism techniques that are truly used in practice are transferred into a machine learning context. An important consequence of this is that ML practitioners can now readily transfer existing IR-related results to ML on graphs (Sec. \ref{sec:theo}).

\section{Related Work}
Graphs are a powerful means of representing semantic information. Since graphs are a very general data type, of which most other data types are special cases, graphs have a plethora of applications. They can be used to extend or combine other data types like text, images, or time series \citep{noble2003graph, vazirgiannis2018graphrep} and there are also data sets specific to graphs. Most commonly, these data sets are related to biology or chemistry. However, computer vision, social networks, and synthetic graphs without a related field are also present \citep{DBLP:journals/corr/abs-2007-08663}. Neural learning on structured data like graphs was first introduced in \citep{baskin1997neural, sperduti1997supervised}. Recently, a more specific deep learning model was pioneered for graphs, the graph neural network (GNN) \citep{gori2005new, scarselli2008graph}. The GNN led to the development of a multitude of related models \citep{DBLP:journals/corr/DuvenaudMAGHAA15, DBLP:journals/corr/LiTBZ15} usually referred to just as GNNs. GNNs allow for the joint training of graph feature extraction and classification, which previous models did not. \citet{gilmer2017neural} gave a very general characterization of GNNs called message-passing neural networks (MPNN), which most GNN models can be characterized as. Lately, multiple concepts from other domains of deep learning have been transferred to GNNs like the attention mechanism \citep{velikovi2018graph} and hierarchical pooling \citep{DBLP:journals/corr/abs-1806-08804}.

\citet{cybenko1989approximation} proved the first universality result for one of the earliest deep learning models, the smallest possible multilayer perceptron (MLP). This result has since then been expanded to different activation functions \citep{leshno1993multilayer, barron1994approximation}, to width-bounded MLPs \citep{lu2017expressive, DBLP:conf/iclr/LiangS17}, and more recently to different layered artificial neural networks like the convolutional neural network \citep{zhou2020universality}. Analogous results had been lacking for MPNNs, which are now well-established to be non-universal \citep{xu2018how, DBLP:conf/aaai/0001RFHLRG19, abboud2020surprising}.
Following this finding, multiple attempts were made to establish universal ML models on graph data. \citet{murphy2019relational} proposed relational pooling (RP), \citet{sato2020random} proposed random node initialization (RNI), and \citet{ijcai2020-294} proposed the colored local iterative procedure (CLIP), all of which provide universality to MPNNs \citep{abboud2020surprising} and are closely related to one another as random data augmentations (RDA). \citet{DBLP:conf/aaai/0001RFHLRG19, NEURIPS2020_f81dee42} proposed k-GNNs based on the k-dimensional Weisfeiler-Leman algorithm and expanded on it by proposing the $\delta$-k-GNN a local approximation variant of the k-GNN. \citet{NEURIPS2019_bb04af0f, maron2018invariant} propose provably powerful graph networks, which are 2-WL powerful, and invariant graph networks, which are proven to be universal, see \citet{ijcai2021-618} for further pointers.

In this paper, we only consider methods that are scalable, practical, and universal at the same time. 
Therefore for the context of this paper, only RDAs \citep{murphy2019relational, ijcai2020-294, sato2020random} are considered.

\section{Background} \label{sec:ir}

\subsection{Graphs and Colorings}\label{sec:graphs}
We consider undirected, finite graphs $G = (V, E)$ which consist of a set of vertices $V \subseteq \mathbb{N}$ and a set of edges $E \subseteq V^2$, where $E$ is symmetric. From this point onward, let $n:=|V|$ and $V=\{1, \dots{}, n\}$.
Additionally, we let~$\mathcal{G}$ denote the set of all graphs, while~$\mathcal{G}_n$ denotes the set of all graphs on $n$ vertices.

In ML contexts, graphs typically carry a node representation in $\mathbb R^d$, which we denote by $X=\{\bx_{1}, \dots, \bx_{n}\}$. IR-tools require these node representations to be discrete, i.e., in $\mathbb N$. For other approaches, discretization can be difficult and techniques are being actively researched~\citep{7837955}. However, the discretization is not critical for our purpose since we only require this encoding to compute a tuple of nodes $(w_1, w_2, \dots)$ from an IR algorithm. After that, our approach continues on the original node representation. Let $enc: \mathbb R^d \times \mathcal{G} \to \mathbb{N}$ be an arbitrary isomorphism-invariant encoding of the node representations. In practice, it is best to choose an encoding for which $enc(\bx_v,G)=enc(\bx_w,G)$ if and only if $\bx_v=\bx_w$, however, this is not a requirement for any of the results we present.

A (node) \emph{coloring} is a surjective map $\pi \colon V \to \{1, \dots{}, k\}$. In our context $\pi(i) := enc(\bx_{i}, G)$. 
We interpret the node representations as colors using $enc$.
We call $\pi^{-1}(i) \subseteq V$ the $i$-th cell for $i \in \{1, \dots{}, k\}$.
If $|\pi(V)| = n$ then $\pi$ is \emph{discrete}. 
This means every node has a unique color in $\pi$.
Note that in the following, we always use ``discrete'' in this sense.
Furthermore, we say a coloring $\pi$ is \emph{finer} than $\pi'$ if $\pi(v) = \pi(v') \implies \pi'(v) = \pi'(v')$ holds for every $v, v' \in V$. 
We may also say $\pi'$ is \emph{coarser} than $\pi$.

We should remark that the IR machinery generalizes to directed and edge-colored graphs \citep{DBLP:conf/wea/Piperno18}.
We denote by $N_G(v)$ the neighborhood of node~$v$ in graph~$G$.

\subsection{Color Refinement}\label{sec:colorrefinement}

We now define color refinement, which all practical graph isomorphism solvers use.
A coloring~$\pi$ is \emph{equitable} if for every pair of (not necessarily distinct) colors~$i,j$ the number of~$j$-colored neighbors is the same for all~$i$-colored vertices. Equitable colorings are precisely the colorings for which color refinement cannot be employed to distinguish nodes further. 
For a colored graph~$(G,\pi)$ there is (up to renaming of colors) a unique coarsest equitable coloring finer than~$\pi$ \citep{McKay81practicalgraph}.
This is precisely the coloring computed by color refinement.
A more algorithmic way to describe the refinement is to define for a colored graph $(G, \pi)$ the naively refined graph $(G, \pi^r)$ where  
$\pi^r(v) := (\pi(v), \{\hspace{-0.125cm}\{\pi(v') \; | \; v' \in N_G(v)\}\hspace{-0.125cm}\}).$
The naive refinement $r$ is applied exhaustively, i.e., until vertices cannot be partitioned further. The result is precisely the coarsest equitable coloring.

\subsection{Message Passing Neural Networks}

Formally, given a graph $G$ and the vector representation $\bx_{v, t}$ of node $v$ at time $t$, a message passing update can be formulated as:
\begin{equation} \label{eq:combineaggr}
     \bx_{v, t+1}:=\combine(\bx_{v, t}, \aggregate(\{\hspace{-0.125cm}\{\bx_{w, t}|w\in N_G(v)\}\hspace{-0.125cm}\}))
\end{equation}
In this context, time t is usually interpreted as the layer and the $\aggregate$ function is typically required to be invariant under isomorphisms to prevent learning spurious features from the arbitrary node ordering of the input. We want to point out the similarity between the refinement $r$ in color refinement and the message passing update (Equation~\ref{eq:combineaggr}). MPNNs are just computing on the colors computed by color refinement, which is why MPNNs are at most as powerful as color refinement in distinguishing graphs. We refer to~\citet{xu2018how} and \citet{DBLP:conf/aaai/0001RFHLRG19} for a formal comparison. Common instances of MPNNs are graph convolutional networks (GCN) \citep{DBLP:journals/corr/DuvenaudMAGHAA15}, graph attention networks \citep{velikovi2018graph}, and graph isomorphism networks (GIN) \citep{xu2018how}. We use GIN throughout this paper, arguably the most efficient of these models \cite{xu2018how, ijcai2020-294}.

A GIN is characterized by being simple and yet as powerful as the color refinement algorithm. It updates its node representations $h_v^{(k)}$ in layer $k$ as follows:
\begin{align}\label{eq:GIN}
     h_v^{(k)} :=\text{MLP}^{(k)}\left(\left(1+\epsilon^{(k)}\right)h_v^{(k-1)}+\sum_{u\in\mathcal N_G(v)} h_u^{(k-1)}\right),
\end{align}
where $\text{MLP}^{(k)}$ is an arbitrary multi-layer perceptron (MLP) in layer $k$, $\epsilon^{(k)}$ is a learnable parameter of layer $k$, and $h_v^{(0)}$ is the input representation of node $v$. A GIN layer has far fewer parameters than other MPNN layers, since it depends on only one MLP in each of its layers instead of for instance three in the case of a GCN layer.

\subsection{Random Data Augmentations}

\citet{murphy2019relational} propose RP, which attaches a random permutation to the graph by attaching to each node a one-hot encoding of its image.

\citet{sato2020random} proposed RNI. For this, let $G$ be a graph with node representations $\{\bx_{1}, \dots, \bx_{n}\}$ and $d \in \mathbb{N}$ a constant (see Section~\ref{sec:graphs}). 
Random node initalization (RNI) concatenates $d$ features sampled from a random distribution $\mathcal X$ to each node \[\forall v \in \{1,\dots, n\}:\bx_{v} \gets \bx_{v} \circ (r_1, \dots, r_d), r_1 \dots, r_d \sim \mathcal X.\]

\citet{ijcai2020-294} propose CLIP, in which they first apply color refinement and then individualize each node of each color class by assigning it a one-hot encoding of a natural number. If $C$ is the set of all nodes of one color class, then each node $v \in C$ is randomly assigned a unique number in $\{0, \dots, |C|-1\}$, which is then one-hot encoded and concatenated onto its node representation $x_v$.

\section{Data Augmentation by Individualization Refinement}
First, we describe individualization-refinement trees. Based on these, we introduce the individualiza\-tion-refinement node initialization (IRNI) framework. After introducing the framework, we demonstrate that RNI, CLIP, and RP can be expressed as a manifestation of this framework. Lastly, we give several theoretical insights into the framework and prove a general universality for these methods under a natural compatibility constraint.

\subsection{Individualization Refinement Trees}
Individualization refinement (IR) trees are the backtracking trees of practical graph isomorphism algorithms.
We use randomly sampled leaves of these trees for data augmentation purposes. These have very specific properties that we describe below.
These leaves correspond to sequences of nodes $(w_1, \dots{}, w_k)$ of the graph, which we translate into features of the MPNN.  
One central property of the sequence is that distinguishing its nodes from other nodes in the graph and applying color refinement yields a discrete coloring.

In the following, we describe all the necessary ingredients of IR for the purposes of this paper. 
We remark that the IR paradigm is a complex machinery refined over many decades into sophisticated software libraries. We refer to \citet{McKay201494} and \citet{anders2020engineering} for a more exhaustive description. 
\paragraph{Refinement.}
The most crucial subroutine of an IR algorithm is the \emph{refinement} $\Refx : G \times \Pi \times V^* \to \Pi$, where $\Pi$ denotes the set of all vertex colorings of $G$.
A refinement must satisfy two properties: firstly, it must be invariant under isomorphism.
Secondly, it must individualize vertices in $\nu$, i.e., for all $v \in \nu$ it holds that $\Refx(G, \pi, \nu)^{-1}(v) = \{v\}$.
Our definition of refinement is slightly more general compared to \cite{McKay201494}, which leads to a slight technicality that we discuss in the appendix. 

In practice, IR tools use color refinement as their refinement (as described previously in Sec.~\ref{sec:colorrefinement}). 
We denote color refinement as~$\CRefx(G, \pi,  \epsilon)$, where $\epsilon$ denotes the empty sequence (explained further below).

\paragraph{Individualization.} IR algorithms make use of \emph{individualization}, a process that artificially forces a node into its own color class, distinguishing it.  
To record which vertices have been individualized we use a sequence $\nu =(v_1,\ldots,v_k)\in V^*$ (here $^*$ denotes the Kleene star).
We modify color refinement so that~$\CRefx(G, \pi,  \nu)$ is the unique coarsest equitable coloring finer than~$\pi$ in which every node in~$\nu$ is a singleton with its own artificial color. Artificial distinctions caused by individualizations are thus taken into account.
\definecolor{crimson}{rgb}{1,0.5,0}
\definecolor{lightblue}{rgb}{0.5,0.5,1.0}
\definecolor{darkred}{rgb}{0.5,0,0}
\definecolor{darkgreen}{rgb}{0,0.5,0}
\definecolor{darkblue}{rgb}{0,0,0.5}

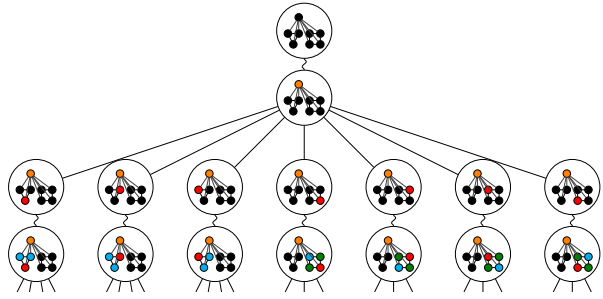
\begin{figure}
  \begin{center}
    \scalebox{0.66}{
    \begin{tikzpicture}[scale=0.9]
      \tikzset{decoration={snake,amplitude=.4mm,segment length=2mm,
      post length=0mm,pre length=0mm}}

      \node[draw,circle,scale=0.66,inner sep=0.5] at (2,0.5) (n_00) {
        \begin{tikzpicture}[scale=0.33]
        \node[draw,circle,fill=black,scale=0.3,minimum width = 2em] at (0.5,0) (n_0) {};
        \node[draw,circle,fill=black,scale=0.3,minimum width = 2em] at (1,1) (n_1) {};
        \node[draw,circle,fill=black,scale=0.3,minimum width = 2em] at (0,1) (n_2) {};

        \node[draw,circle,fill=black,scale=0.3,minimum width = 2em] at (3,0) (n_0') {};
        \node[draw,circle,fill=black,scale=0.3,minimum width = 2em] at (3,1) (n_1') {};
        \node[draw,circle,fill=black,scale=0.3,minimum width = 2em] at (2,1) (n_2') {};
        \node[draw,circle,fill=black,scale=0.3,minimum width = 2em] at (2,0) (n_3') {};
        
        \node[draw,circle,fill=black,scale=0.3,minimum width = 2em] at (1,2.5) (n_3) {};  
        \draw (n_1) -- (n_0);
        \draw (n_1) -- (n_2);
        \draw (n_2) -- (n_0);
        \draw (n_1') -- (n_0');
        \draw (n_0') -- (n_3');
        \draw (n_2') -- (n_3');
        \draw (n_2') -- (n_1');

        \draw[draw=black!75] (n_0) -- (n_3);
        \draw[draw=black!75] (n_1) -- (n_3);
        \draw[draw=black!75] (n_2) -- (n_3);
        \draw[draw=black!75] (n_0') -- (n_3);
        \draw[draw=black!75] (n_1') -- (n_3);
        \draw[draw=black!75] (n_2') -- (n_3);
        \draw[draw=black!75] (n_3') -- (n_3);
      \end{tikzpicture}
      };  
      \node[draw,circle,scale=0.66,fill=white,inner sep=0.5] at (2,-1.5+0.5) (n_01) {
        \begin{tikzpicture}[scale=0.33]
        \node[draw,circle,fill=black,scale=0.3,minimum width = 2em] at (0.5,0) (n_0) {};
        \node[draw,circle,fill=black,scale=0.3,minimum width = 2em] at (1,1) (n_1) {};
        \node[draw,circle,fill=black,scale=0.3,minimum width = 2em] at (0,1) (n_2) {};

        \node[draw,circle,fill=black,scale=0.3,minimum width = 2em] at (3,0) (n_0') {};
        \node[draw,circle,fill=black,scale=0.3,minimum width = 2em] at (3,1) (n_1') {};
        \node[draw,circle,fill=black,scale=0.3,minimum width = 2em] at (2,1) (n_2') {};
        \node[draw,circle,fill=black,scale=0.3,minimum width = 2em] at (2,0) (n_3') {};
        
        \node[draw,circle,fill=orange,scale=0.3,minimum width = 2em] at (1,2.5) (n_3) {};  
        \draw (n_1) -- (n_0);
        \draw (n_1) -- (n_2);
        \draw (n_2) -- (n_0);
        \draw (n_1') -- (n_0');
        \draw (n_0') -- (n_3');
        \draw (n_2') -- (n_3');
        \draw (n_2') -- (n_1');

        \draw[draw=black!75] (n_0) -- (n_3);
        \draw[draw=black!75] (n_1) -- (n_3);
        \draw[draw=black!75] (n_2) -- (n_3);
        \draw[draw=black!75] (n_0') -- (n_3);
        \draw[draw=black!75] (n_1') -- (n_3);
        \draw[draw=black!75] (n_2') -- (n_3);
        \draw[draw=black!75] (n_3') -- (n_3);
      \end{tikzpicture}
      };
      \draw[decorate] (n_00) -- (n_01);

      \node[draw,circle,scale=0.66,fill=white,inner sep=0.5] at (-4,-3) (n_10) {
        \begin{tikzpicture}[scale=0.33]
        \node[draw,circle,fill=red,scale=0.3,minimum width = 2em] at (0.5,0) (n_0) {};
        \node[draw,circle,fill=black,scale=0.3,minimum width = 2em] at (1,1) (n_1) {};
        \node[draw,circle,fill=black,scale=0.3,minimum width = 2em] at (0,1) (n_2) {};

        \node[draw,circle,fill=black,scale=0.3,minimum width = 2em] at (3,0) (n_0') {};
        \node[draw,circle,fill=black,scale=0.3,minimum width = 2em] at (3,1) (n_1') {};
        \node[draw,circle,fill=black,scale=0.3,minimum width = 2em] at (2,1) (n_2') {};
        \node[draw,circle,fill=black,scale=0.3,minimum width = 2em] at (2,0) (n_3') {};
        
        \node[draw,circle,fill=orange,scale=0.3,minimum width = 2em] at (1,2.5) (n_3) {};  
        \draw (n_1) -- (n_0);
        \draw (n_1) -- (n_2);
        \draw (n_2) -- (n_0);
        \draw (n_1') -- (n_0');
        \draw (n_0') -- (n_3');
        \draw (n_2') -- (n_3');
        \draw (n_2') -- (n_1');

        \draw[draw=black!75] (n_0) -- (n_3);
        \draw[draw=black!75] (n_1) -- (n_3);
        \draw[draw=black!75] (n_2) -- (n_3);
        \draw[draw=black!75] (n_0') -- (n_3);
        \draw[draw=black!75] (n_1') -- (n_3);
        \draw[draw=black!75] (n_2') -- (n_3);
        \draw[draw=black!75] (n_3') -- (n_3);
      \end{tikzpicture}
      };

      \node[draw,circle,scale=0.66,fill=white,inner sep=0.5] at (-4,-4.5) (n_11) {
        \begin{tikzpicture}[scale=0.33]
        \node[draw,circle,fill=red,scale=0.3,minimum width = 2em] at (0.5,0) (n_0) {};
        \node[draw,circle,fill=cyan,scale=0.3,minimum width = 2em] at (1,1) (n_1) {};
        \node[draw,circle,fill=cyan,scale=0.3,minimum width = 2em] at (0,1) (n_2) {};

        \node[draw,circle,fill=black,scale=0.3,minimum width = 2em] at (3,0) (n_0') {};
        \node[draw,circle,fill=black,scale=0.3,minimum width = 2em] at (3,1) (n_1') {};
        \node[draw,circle,fill=black,scale=0.3,minimum width = 2em] at (2,1) (n_2') {};
        \node[draw,circle,fill=black,scale=0.3,minimum width = 2em] at (2,0) (n_3') {};
        
        \node[draw,circle,fill=orange,scale=0.3,minimum width = 2em] at (1,2.5) (n_3) {};  
        \draw (n_1) -- (n_0);
        \draw (n_1) -- (n_2);
        \draw (n_2) -- (n_0);
        \draw (n_1') -- (n_0');
        \draw (n_0') -- (n_3');
        \draw (n_2') -- (n_3');
        \draw (n_2') -- (n_1');

        \draw[draw=black!75] (n_0) -- (n_3);
        \draw[draw=black!75] (n_1) -- (n_3);
        \draw[draw=black!75] (n_2) -- (n_3);
        \draw[draw=black!75] (n_0') -- (n_3);
        \draw[draw=black!75] (n_1') -- (n_3);
        \draw[draw=black!75] (n_2') -- (n_3);
        \draw[draw=black!75] (n_3') -- (n_3);
      \end{tikzpicture}
      };
      \draw[decorate] (n_10) -- (n_11);

      \node[draw,circle,scale=0.66,fill=white,inner sep=0.5] at (-2,-3) (n_20) {
        \begin{tikzpicture}[scale=0.33]
        \node[draw,circle,fill=black,scale=0.3,minimum width = 2em] at (0.5,0) (n_0) {};
        \node[draw,circle,fill=red,scale=0.3,minimum width = 2em] at (1,1) (n_1) {};
        \node[draw,circle,fill=black,scale=0.3,minimum width = 2em] at (0,1) (n_2) {};

        \node[draw,circle,fill=black,scale=0.3,minimum width = 2em] at (3,0) (n_0') {};
        \node[draw,circle,fill=black,scale=0.3,minimum width = 2em] at (3,1) (n_1') {};
        \node[draw,circle,fill=black,scale=0.3,minimum width = 2em] at (2,1) (n_2') {};
        \node[draw,circle,fill=black,scale=0.3,minimum width = 2em] at (2,0) (n_3') {};
        
        \node[draw,circle,fill=orange,scale=0.3,minimum width = 2em] at (1,2.5) (n_3) {};  
        \draw (n_1) -- (n_0);
        \draw (n_1) -- (n_2);
        \draw (n_2) -- (n_0);
        \draw (n_1') -- (n_0');
        \draw (n_0') -- (n_3');
        \draw (n_2') -- (n_3');
        \draw (n_2') -- (n_1');

        \draw[draw=black!75] (n_0) -- (n_3);
        \draw[draw=black!75] (n_1) -- (n_3);
        \draw[draw=black!75] (n_2) -- (n_3);
        \draw[draw=black!75] (n_0') -- (n_3);
        \draw[draw=black!75] (n_1') -- (n_3);
        \draw[draw=black!75] (n_2') -- (n_3);
        \draw[draw=black!75] (n_3') -- (n_3);
      \end{tikzpicture}
      };
      \node[draw,circle,scale=0.66,fill=white,inner sep=0.5] at (-2,-4.5) (n_21) {
        \begin{tikzpicture}[scale=0.33]
        \node[draw,circle,fill=cyan,scale=0.3,minimum width = 2em] at (0.5,0) (n_0) {};
        \node[draw,circle,fill=red,scale=0.3,minimum width = 2em] at (1,1) (n_1) {};
        \node[draw,circle,fill=cyan,scale=0.3,minimum width = 2em] at (0,1) (n_2) {};

        \node[draw,circle,fill=black,scale=0.3,minimum width = 2em] at (3,0) (n_0') {};
        \node[draw,circle,fill=black,scale=0.3,minimum width = 2em] at (3,1) (n_1') {};
        \node[draw,circle,fill=black,scale=0.3,minimum width = 2em] at (2,1) (n_2') {};
        \node[draw,circle,fill=black,scale=0.3,minimum width = 2em] at (2,0) (n_3') {};
        
        \node[draw,circle,fill=orange,scale=0.3,minimum width = 2em] at (1,2.5) (n_3) {};  
        \draw (n_1) -- (n_0);
        \draw (n_1) -- (n_2);
        \draw (n_2) -- (n_0);
        \draw (n_1') -- (n_0');
        \draw (n_0') -- (n_3');
        \draw (n_2') -- (n_3');
        \draw (n_2') -- (n_1');

        \draw[draw=black!75] (n_0) -- (n_3);
        \draw[draw=black!75] (n_1) -- (n_3);
        \draw[draw=black!75] (n_2) -- (n_3);
        \draw[draw=black!75] (n_0') -- (n_3);
        \draw[draw=black!75] (n_1') -- (n_3);
        \draw[draw=black!75] (n_2') -- (n_3);
        \draw[draw=black!75] (n_3') -- (n_3);
      \end{tikzpicture}
      };
      \draw[decorate] (n_20) -- (n_21);

      \node[draw,circle,scale=0.66,fill=white,inner sep=0.5] at (0,-3) (n_30) {
        \begin{tikzpicture}[scale=0.33]
        \node[draw,circle,fill=black,scale=0.3,minimum width = 2em] at (0.5,0) (n_0) {};
        \node[draw,circle,fill=black,scale=0.3,minimum width = 2em] at (1,1) (n_1) {};
        \node[draw,circle,fill=red,scale=0.3,minimum width = 2em] at (0,1) (n_2) {};

        \node[draw,circle,fill=black,scale=0.3,minimum width = 2em] at (3,0) (n_0') {};
        \node[draw,circle,fill=black,scale=0.3,minimum width = 2em] at (3,1) (n_1') {};
        \node[draw,circle,fill=black,scale=0.3,minimum width = 2em] at (2,1) (n_2') {};
        \node[draw,circle,fill=black,scale=0.3,minimum width = 2em] at (2,0) (n_3') {};
        
        \node[draw,circle,fill=orange,scale=0.3,minimum width = 2em] at (1,2.5) (n_3) {};  
        \draw (n_1) -- (n_0);
        \draw (n_1) -- (n_2);
        \draw (n_2) -- (n_0);
        \draw (n_1') -- (n_0');
        \draw (n_0') -- (n_3');
        \draw (n_2') -- (n_3');
        \draw (n_2') -- (n_1');

        \draw[draw=black!75] (n_0) -- (n_3);
        \draw[draw=black!75] (n_1) -- (n_3);
        \draw[draw=black!75] (n_2) -- (n_3);
        \draw[draw=black!75] (n_0') -- (n_3);
        \draw[draw=black!75] (n_1') -- (n_3);
        \draw[draw=black!75] (n_2') -- (n_3);
        \draw[draw=black!75] (n_3') -- (n_3);
      \end{tikzpicture}
      };
      \node[draw,circle,scale=0.66,fill=white,inner sep=0.5] at (0,-4.5) (n_31) {
        \begin{tikzpicture}[scale=0.33]
        \node[draw,circle,fill=cyan,scale=0.3,minimum width = 2em] at (0.5,0) (n_0) {};
        \node[draw,circle,fill=cyan,scale=0.3,minimum width = 2em] at (1,1) (n_1) {};
        \node[draw,circle,fill=red,scale=0.3,minimum width = 2em] at (0,1) (n_2) {};

        \node[draw,circle,fill=black,scale=0.3,minimum width = 2em] at (3,0) (n_0') {};
        \node[draw,circle,fill=black,scale=0.3,minimum width = 2em] at (3,1) (n_1') {};
        \node[draw,circle,fill=black,scale=0.3,minimum width = 2em] at (2,1) (n_2') {};
        \node[draw,circle,fill=black,scale=0.3,minimum width = 2em] at (2,0) (n_3') {};
        
        \node[draw,circle,fill=orange,scale=0.3,minimum width = 2em] at (1,2.5) (n_3) {};  
        \draw (n_1) -- (n_0);
        \draw (n_1) -- (n_2);
        \draw (n_2) -- (n_0);
        \draw (n_1') -- (n_0');
        \draw (n_0') -- (n_3');
        \draw (n_2') -- (n_3');
        \draw (n_2') -- (n_1');

        \draw[draw=black!75] (n_0) -- (n_3);
        \draw[draw=black!75] (n_1) -- (n_3);
        \draw[draw=black!75] (n_2) -- (n_3);
        \draw[draw=black!75] (n_0') -- (n_3);
        \draw[draw=black!75] (n_1') -- (n_3);
        \draw[draw=black!75] (n_2') -- (n_3);
        \draw[draw=black!75] (n_3') -- (n_3);
      \end{tikzpicture}
      };
      \draw[decorate] (n_30) -- (n_31);

      \node[draw,circle,scale=0.66,fill=white,inner sep=0.5] at (2,-3) (n_40) {
        \begin{tikzpicture}[scale=0.33]
        \node[draw,circle,fill=black,scale=0.3,minimum width = 2em] at (0.5,0) (n_0) {};
        \node[draw,circle,fill=black,scale=0.3,minimum width = 2em] at (1,1) (n_1) {};
        \node[draw,circle,fill=black,scale=0.3,minimum width = 2em] at (0,1) (n_2) {};

        \node[draw,circle,fill=red,scale=0.3,minimum width = 2em] at (3,0) (n_0') {};
        \node[draw,circle,fill=black,scale=0.3,minimum width = 2em] at (3,1) (n_1') {};
        \node[draw,circle,fill=black,scale=0.3,minimum width = 2em] at (2,1) (n_2') {};
        \node[draw,circle,fill=black,scale=0.3,minimum width = 2em] at (2,0) (n_3') {};
        
        \node[draw,circle,fill=orange,scale=0.3,minimum width = 2em] at (1,2.5) (n_3) {};  
        \draw (n_1) -- (n_0);
        \draw (n_1) -- (n_2);
        \draw (n_2) -- (n_0);
        \draw (n_1') -- (n_0');
        \draw (n_0') -- (n_3');
        \draw (n_2') -- (n_3');
        \draw (n_2') -- (n_1');

        \draw[draw=black!75] (n_0) -- (n_3);
        \draw[draw=black!75] (n_1) -- (n_3);
        \draw[draw=black!75] (n_2) -- (n_3);
        \draw[draw=black!75] (n_0') -- (n_3);
        \draw[draw=black!75] (n_1') -- (n_3);
        \draw[draw=black!75] (n_2') -- (n_3);
        \draw[draw=black!75] (n_3') -- (n_3);
      \end{tikzpicture}
      };
      \node[draw,circle,scale=0.66,fill=white,inner sep=0.5] at (2,-4.5) (n_41) {
        \begin{tikzpicture}[scale=0.33]
        \node[draw,circle,fill=black,scale=0.3,minimum width = 2em] at (0.5,0) (n_0) {};
        \node[draw,circle,fill=black,scale=0.3,minimum width = 2em] at (1,1) (n_1) {};
        \node[draw,circle,fill=black,scale=0.3,minimum width = 2em] at (0,1) (n_2) {};

        \node[draw,circle,fill=red,scale=0.3,minimum width = 2em] at (3,0) (n_0') {};
        \node[draw,circle,fill=darkgreen,scale=0.3,minimum width = 2em] at (3,1) (n_1') {};
        \node[draw,circle,fill=cyan,scale=0.3,minimum width = 2em] at (2,1) (n_2') {};
        \node[draw,circle,fill=darkgreen,scale=0.3,minimum width = 2em] at (2,0) (n_3') {};
        
        \node[draw,circle,fill=orange,scale=0.3,minimum width = 2em] at (1,2.5) (n_3) {};  
        \draw (n_1) -- (n_0);
        \draw (n_1) -- (n_2);
        \draw (n_2) -- (n_0);
        \draw (n_1') -- (n_0');
        \draw (n_0') -- (n_3');
        \draw (n_2') -- (n_3');
        \draw (n_2') -- (n_1');

        \draw[draw=black!75] (n_0) -- (n_3);
        \draw[draw=black!75] (n_1) -- (n_3);
        \draw[draw=black!75] (n_2) -- (n_3);
        \draw[draw=black!75] (n_0') -- (n_3);
        \draw[draw=black!75] (n_1') -- (n_3);
        \draw[draw=black!75] (n_2') -- (n_3);
        \draw[draw=black!75] (n_3') -- (n_3);
      \end{tikzpicture}
      };
      \draw[decorate] (n_40) -- (n_41);

      \node[draw,circle,scale=0.66,fill=white,inner sep=0.5] at (4,-3) (n_50) {
        \begin{tikzpicture}[scale=0.33]
        \node[draw,circle,fill=black,scale=0.3,minimum width = 2em] at (0.5,0) (n_0) {};
        \node[draw,circle,fill=black,scale=0.3,minimum width = 2em] at (1,1) (n_1) {};
        \node[draw,circle,fill=black,scale=0.3,minimum width = 2em] at (0,1) (n_2) {};

        \node[draw,circle,fill=black,scale=0.3,minimum width = 2em] at (3,0) (n_0') {};
        \node[draw,circle,fill=red,scale=0.3,minimum width = 2em] at (3,1) (n_1') {};
        \node[draw,circle,fill=black,scale=0.3,minimum width = 2em] at (2,1) (n_2') {};
        \node[draw,circle,fill=black,scale=0.3,minimum width = 2em] at (2,0) (n_3') {};
        
        \node[draw,circle,fill=orange,scale=0.3,minimum width = 2em] at (1,2.5) (n_3) {};  
        \draw (n_1) -- (n_0);
        \draw (n_1) -- (n_2);
        \draw (n_2) -- (n_0);
        \draw (n_1') -- (n_0');
        \draw (n_0') -- (n_3');
        \draw (n_2') -- (n_3');
        \draw (n_2') -- (n_1');

        \draw[draw=black!75] (n_0) -- (n_3);
        \draw[draw=black!75] (n_1) -- (n_3);
        \draw[draw=black!75] (n_2) -- (n_3);
        \draw[draw=black!75] (n_0') -- (n_3);
        \draw[draw=black!75] (n_1') -- (n_3);
        \draw[draw=black!75] (n_2') -- (n_3);
        \draw[draw=black!75] (n_3') -- (n_3);
      \end{tikzpicture}
      };
      \node[draw,circle,scale=0.66,fill=white,inner sep=0.5] at (4,-4.5) (n_51) {
        \begin{tikzpicture}[scale=0.33]
        \node[draw,circle,fill=black,scale=0.3,minimum width = 2em] at (0.5,0) (n_0) {};
        \node[draw,circle,fill=black,scale=0.3,minimum width = 2em] at (1,1) (n_1) {};
        \node[draw,circle,fill=black,scale=0.3,minimum width = 2em] at (0,1) (n_2) {};

        \node[draw,circle,fill=darkgreen,scale=0.3,minimum width = 2em] at (3,0) (n_0') {};
        \node[draw,circle,fill=red,scale=0.3,minimum width = 2em] at (3,1) (n_1') {};
        \node[draw,circle,fill=darkgreen,scale=0.3,minimum width = 2em] at (2,1) (n_2') {};
        \node[draw,circle,fill=cyan,scale=0.3,minimum width = 2em] at (2,0) (n_3') {};
        
        \node[draw,circle,fill=orange,scale=0.3,minimum width = 2em] at (1,2.5) (n_3) {};  
        \draw (n_1) -- (n_0);
        \draw (n_1) -- (n_2);
        \draw (n_2) -- (n_0);
        \draw (n_1') -- (n_0');
        \draw (n_0') -- (n_3');
        \draw (n_2') -- (n_3');
        \draw (n_2') -- (n_1');

        \draw[draw=black!75] (n_0) -- (n_3);
        \draw[draw=black!75] (n_1) -- (n_3);
        \draw[draw=black!75] (n_2) -- (n_3);
        \draw[draw=black!75] (n_0') -- (n_3);
        \draw[draw=black!75] (n_1') -- (n_3);
        \draw[draw=black!75] (n_2') -- (n_3);
        \draw[draw=black!75] (n_3') -- (n_3);
      \end{tikzpicture}
      };
      \draw[decorate] (n_50) -- (n_51);

      \node[draw,circle,scale=0.66,fill=white,inner sep=0.5] at (6,-3) (n_60) {
        \begin{tikzpicture}[scale=0.33]
        \node[draw,circle,fill=black,scale=0.3,minimum width = 2em] at (0.5,0) (n_0) {};
        \node[draw,circle,fill=black,scale=0.3,minimum width = 2em] at (1,1) (n_1) {};
        \node[draw,circle,fill=black,scale=0.3,minimum width = 2em] at (0,1) (n_2) {};

        \node[draw,circle,fill=black,scale=0.3,minimum width = 2em] at (3,0) (n_0') {};
        \node[draw,circle,fill=black,scale=0.3,minimum width = 2em] at (3,1) (n_1') {};
        \node[draw,circle,fill=red,scale=0.3,minimum width = 2em] at (2,1) (n_2') {};
        \node[draw,circle,fill=black,scale=0.3,minimum width = 2em] at (2,0) (n_3') {};
        
        \node[draw,circle,fill=orange,scale=0.3,minimum width = 2em] at (1,2.5) (n_3) {};  
        \draw (n_1) -- (n_0);
        \draw (n_1) -- (n_2);
        \draw (n_2) -- (n_0);
        \draw (n_1') -- (n_0');
        \draw (n_0') -- (n_3');
        \draw (n_2') -- (n_3');
        \draw (n_2') -- (n_1');

        \draw[draw=black!75] (n_0) -- (n_3);
        \draw[draw=black!75] (n_1) -- (n_3);
        \draw[draw=black!75] (n_2) -- (n_3);
        \draw[draw=black!75] (n_0') -- (n_3);
        \draw[draw=black!75] (n_1') -- (n_3);
        \draw[draw=black!75] (n_2') -- (n_3);
        \draw[draw=black!75] (n_3') -- (n_3);
      \end{tikzpicture}
      };
      \node[draw,circle,scale=0.66,fill=white,inner sep=0.5] at (6,-4.5) (n_61) {
        \begin{tikzpicture}[scale=0.33]
        \node[draw,circle,fill=black,scale=0.3,minimum width = 2em] at (0.5,0) (n_0) {};
        \node[draw,circle,fill=black,scale=0.3,minimum width = 2em] at (1,1) (n_1) {};
        \node[draw,circle,fill=black,scale=0.3,minimum width = 2em] at (0,1) (n_2) {};

        \node[draw,circle,fill=cyan,scale=0.3,minimum width = 2em] at (3,0) (n_0') {};
        \node[draw,circle,fill=darkgreen,scale=0.3,minimum width = 2em] at (3,1) (n_1') {};
        \node[draw,circle,fill=red,scale=0.3,minimum width = 2em] at (2,1) (n_2') {};
        \node[draw,circle,fill=darkgreen,scale=0.3,minimum width = 2em] at (2,0) (n_3') {};
        
        \node[draw,circle,fill=orange,scale=0.3,minimum width = 2em] at (1,2.5) (n_3) {};  
        \draw (n_1) -- (n_0);
        \draw (n_1) -- (n_2);
        \draw (n_2) -- (n_0);
        \draw (n_1') -- (n_0');
        \draw (n_0') -- (n_3');
        \draw (n_2') -- (n_3');
        \draw (n_2') -- (n_1');

        \draw[draw=black!75] (n_0) -- (n_3);
        \draw[draw=black!75] (n_1) -- (n_3);
        \draw[draw=black!75] (n_2) -- (n_3);
        \draw[draw=black!75] (n_0') -- (n_3);
        \draw[draw=black!75] (n_1') -- (n_3);
        \draw[draw=black!75] (n_2') -- (n_3);
        \draw[draw=black!75] (n_3') -- (n_3);
      \end{tikzpicture}
      };
      \draw[decorate] (n_60) -- (n_61);
      \node[draw,circle,scale=0.66,fill=white,inner sep=0.5] at (8,-3) (n_70) {
        \begin{tikzpicture}[scale=0.33]
        \node[draw,circle,fill=black,scale=0.3,minimum width = 2em] at (0.5,0) (n_0) {};
        \node[draw,circle,fill=black,scale=0.3,minimum width = 2em] at (1,1) (n_1) {};
        \node[draw,circle,fill=black,scale=0.3,minimum width = 2em] at (0,1) (n_2) {};

        \node[draw,circle,fill=black,scale=0.3,minimum width = 2em] at (3,0) (n_0') {};
        \node[draw,circle,fill=black,scale=0.3,minimum width = 2em] at (3,1) (n_1') {};
        \node[draw,circle,fill=black,scale=0.3,minimum width = 2em] at (2,1) (n_2') {};
        \node[draw,circle,fill=red,scale=0.3,minimum width = 2em] at (2,0) (n_3') {};
        
        \node[draw,circle,fill=orange,scale=0.3,minimum width = 2em] at (1,2.5) (n_3) {};  
        \draw (n_1) -- (n_0);
        \draw (n_1) -- (n_2);
        \draw (n_2) -- (n_0);
        \draw (n_1') -- (n_0');
        \draw (n_0') -- (n_3');
        \draw (n_2') -- (n_3');
        \draw (n_2') -- (n_1');

        \draw[draw=black!75] (n_0) -- (n_3);
        \draw[draw=black!75] (n_1) -- (n_3);
        \draw[draw=black!75] (n_2) -- (n_3);
        \draw[draw=black!75] (n_0') -- (n_3);
        \draw[draw=black!75] (n_1') -- (n_3);
        \draw[draw=black!75] (n_2') -- (n_3);
        \draw[draw=black!75] (n_3') -- (n_3);
      \end{tikzpicture}
      };
      \node[draw,circle,scale=0.66,fill=white,inner sep=0.5] at (8,-4.5) (n_71) {
        \begin{tikzpicture}[scale=0.33]
        \node[draw,circle,fill=black,scale=0.3,minimum width = 2em] at (0.5,0) (n_0) {};
        \node[draw,circle,fill=black,scale=0.3,minimum width = 2em] at (1,1) (n_1) {};
        \node[draw,circle,fill=black,scale=0.3,minimum width = 2em] at (0,1) (n_2) {};

        \node[draw,circle,fill=darkgreen,scale=0.3,minimum width = 2em] at (3,0) (n_0') {};
        \node[draw,circle,fill=cyan,scale=0.3,minimum width = 2em] at (3,1) (n_1') {};
        \node[draw,circle,fill=darkgreen,scale=0.3,minimum width = 2em] at (2,1) (n_2') {};
        \node[draw,circle,fill=red,scale=0.3,minimum width = 2em] at (2,0) (n_3') {};
        
        \node[draw,circle,fill=orange,scale=0.3,minimum width = 2em] at (1,2.5) (n_3) {};  
        \draw (n_1) -- (n_0);
        \draw (n_1) -- (n_2);
        \draw (n_2) -- (n_0);
        \draw (n_1') -- (n_0');
        \draw (n_0') -- (n_3');
        \draw (n_2') -- (n_3');
        \draw (n_2') -- (n_1');

        \draw[draw=black!75] (n_0) -- (n_3);
        \draw[draw=black!75] (n_1) -- (n_3);
        \draw[draw=black!75] (n_2) -- (n_3);
        \draw[draw=black!75] (n_0') -- (n_3);
        \draw[draw=black!75] (n_1') -- (n_3);
        \draw[draw=black!75] (n_2') -- (n_3);
        \draw[draw=black!75] (n_3') -- (n_3);
      \end{tikzpicture}
      };
      \draw[decorate] (n_70) -- (n_71);

      \draw[,black] (n_01) -- (n_10);
    
      \draw[,black] (n_01) -- (n_20);
      \draw[,black] (n_01) -- (n_30);
      \draw[,black] (n_01) -- (n_40);
      \draw[,black] (n_01) -- (n_50);
      \draw[,black] (n_01) -- (n_60);
      \draw[,black] (n_01) -- (n_70);

        \foreach \n in {1,...,7}{
          \ifthenelse{\n<4}{
      \node[draw=white,fill=white] at (-6-0.5     + 2*\n, -5.5) (n_\n1_1) {};
      \node[draw=white,fill=white] at (-6-0.175 + 2*\n, -5.5) (n_\n1_2) {};
      \node[draw=white,fill=white] at (-6+0.175 + 2*\n, -5.5) (n_\n1_3) {};
      \node[draw=white,fill=white] at (-6+0.5 + 2*\n, -5.5) (n_\n1_4) {};
      \draw[,black] (n_\n1) -- (n_\n1_1);
      \draw[,black] (n_\n1) -- (n_\n1_2);
      \draw[,black] (n_\n1) -- (n_\n1_3);
      \draw[,black] (n_\n1) -- (n_\n1_4);
          }{
            \node[draw=white,fill=white] at (-6     + 2*\n, -5.5) (n_\n1_1) {};
            \node[draw=white,fill=white] at (-6-0.5 + 2*\n, -5.5) (n_\n1_2) {};
            \node[draw=white,fill=white] at (-6+0.5 + 2*\n, -5.5) (n_\n1_3) {};
            \draw[,black] (n_\n1) -- (n_\n1_1);
            \draw[,black] (n_\n1) -- (n_\n1_2);
            \draw[,black] (n_\n1) -- (n_\n1_3);
          }
        }
    \end{tikzpicture}}

    \caption{In IR, refinement and individualization are alternatingly applied. This continues until the coloring of the graph becomes discrete. Two nodes connected by a squiggly line are considered one node of the IR tree: they illustrate the coloring of the graph before and after color refinement.} \label{fig:ir_tree}
  \end{center}
\end{figure}

\paragraph{Cell selector.} In a backtracking fashion, the goal of an IR algorithm is to reach a discrete coloring using color refinement and individualization. For this, color refinement is first applied.
If this does not yield a discrete coloring, individualization is applied, branching over all vertices in one non-singleton cell. The task of the \emph{cell selector} is to (isomorphism-invariantly) pick the non-singleton cell. 
Figure~\ref{fig:ir_tree} illustrates this process.
While many choices within certain restrictions are possible, one example that we will also use later on is the selector that always chooses the first, largest non-singleton cell of~$\pi$.

\paragraph{IR trees.}
We first give a formal definition of the IR tree $\Gamma_{\Refx, \Sel}(G, \pi)$, followed by a more intuitive explanation.

Nodes of $\Gamma_{\Refx, \Sel}(G, \pi)$ are sequences of vertices of~$G$.
The root of $\Gamma_{\Refx, \Sel}(G, \pi)$ is the empty sequence $\epsilon=()$. If $\nu = (v_1, \dots{}, v_k)$ is a node in $\Gamma_{\Refx, \Sel}(G, \pi)$ and $C = \Sel(G,\Refx(G, \pi, \nu))$ is the selected cell, then the set of children of~$\nu$ is $\{(v_1, \dots{}, v_k, v) \; | \; v \in C \}$, i.e., all extensions of~$\nu$ by one node~$v$ of the selected cell~$C$.

The root represents the graph (with no individualizations) after refinement (see Figure~\ref{fig:ir_tree}). 
A node $\nu$ represents the graph after all nodes in $\nu$ have been individualized followed by refinement (see Figure~\ref{fig:ir_tree}).
A root-to-$\nu$ walk of the tree is naturally identified with a sequence of individualizations in the graph:
in each step $i$ of the walk, one more node $v_i$ belonging to a non-trivial color class $C$ is individualized (followed by refinement).
The sequence of individualizations $(v_1, \dots{}, v_k)$ uniquely determines the node of the IR tree in which this walk ends.
This is why we identify the name of the node with the sequence of individualizations necessary to reach the node: the sequence of individualizations necessary to reach $\nu$ is $(v_1, \dots{}, v_k) = \nu$.

$\Gamma_{\Refx, \Sel}(G, \pi, \nu)$ denotes the subtree of $\Gamma_{\Refx, \Sel}(G, \pi)$ rooted in $\nu$.
Isomorphism invariance of the IR tree follows from isomorphism invariance of $\Sel$ and $\Refx$:
\begin{lemma}[\citet{McKay201494}] \label{lem:auto_tree_correspondence1} 
Let $\varphi\colon V \to V$ denote an \emph{automorphism} of $(G,\pi)$, i.e., $(G, \pi)^\varphi = (\varphi(V), \varphi(E), \varphi(\pi)) = (V, E, \pi) = G$. Let $\Aut(G, \pi)$ denote all automorphisms of $(G, \pi)$.
Then, if $\nu$ is a node of $\Gamma_{\Refx, \Sel}(G, \pi)$
 and $\varphi \in \Aut(G, \pi)$, then $\nu^\varphi$ is a node of $\Gamma_{\Refx, \Sel}(G, \pi)$ and $\Gamma_{\Refx, \Sel}(G, \pi, \nu)^\varphi = \Gamma_{\Refx, \Sel}(G, \pi, \nu^\varphi)$.
\end{lemma}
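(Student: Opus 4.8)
The plan is to prove this exactly the way it is done in the graph-isomorphism literature: by a straightforward induction on the length of $\nu$, fueled entirely by the two defining invariance properties of the ingredients. The only facts needed are that $\Refx$ and $\Sel$ are invariant under isomorphisms. Since here $\varphi$ is an \emph{automorphism} of $(G,\pi)$, we have $G^\varphi = G$ and $\pi^\varphi = \pi$, so the invariance specializes to
\[
  \Refx(G,\pi,\nu)^\varphi = \Refx(G,\pi,\nu^\varphi) \quad\text{and}\quad \Sel(G,\sigma)^\varphi = \Sel(G,\sigma^\varphi)
\]
for every coloring $\sigma$ and every sequence $\nu$, where $\nu^\varphi := (\varphi(v_1),\dots,\varphi(v_k))$ denotes the componentwise image. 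I would carry along a slightly strengthened induction hypothesis: that $\varphi$ maps the cell selected at $\nu$ onto the cell selected at $\nu^\varphi$, i.e.\ $\Sel(G,\Refx(G,\pi,\nu^\varphi)) = \varphi\bigl(\Sel(G,\Refx(G,\pi,\nu))\bigr)$.

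\emph{Base case.} For $\nu = \epsilon$ we have $\nu^\varphi = \epsilon$, which is the root, hence a node; and chaining the two displayed identities gives $\Sel(G,\Refx(G,\pi,\epsilon))^\varphi = \Sel(G,\Refx(G,\pi,\epsilon))$, so the selected root cell is setwise $\varphi$-invariant, which is the strengthened claim. \emph{Inductive step.} Let $\nu = (v_1,\dots,v_k)$ be a node with $k\geq 1$ and $\nu' = (v_1,\dots,v_{k-1})$ its parent, itself a node. By the induction hypothesis $(\nu')^\varphi$ is a node and $\Sel(G,\Refx(G,\pi,(\nu')^\varphi)) = \varphi(C)$ where $C := \Sel(G,\Refx(G,\pi,\nu'))$ is the cell selected at $\nu'$. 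Since $\nu$ is a child of $\nu'$ we have $v_k\in C$, hence $\varphi(v_k)\in\varphi(C)$, so $\nu^\varphi = ((\nu')^\varphi,\varphi(v_k))$ is a child of $(\nu')^\varphi$ and therefore a node. The strengthened hypothesis for $\nu^\varphi$ again follows by chaining the two displayed identities, now applied to the coloring $\Refx(G,\pi,\nu)$ and using $\Refx(G,\pi,\nu)^\varphi = \Refx(G,\pi,\nu^\varphi)$.

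It remains to upgrade this to the subtree statement. The induction also shows that at every node $\mu$ the map $v\mapsto\varphi(v)$ is a bijection from the children of $\mu$ onto the children of $\mu^\varphi$ (since the child sets are indexed by the respective selected cells, which correspond under $\varphi$). Hence relabelling every node-sequence of $\Gamma_{\Refx,\Sel}(G,\pi,\nu)$ by $\varphi$ produces a rooted subtree of $\Gamma_{\Refx,\Sel}(G,\pi)$ with root $\nu^\varphi$ and with exactly the parent–child structure of $\Gamma_{\Refx,\Sel}(G,\pi,\nu^\varphi)$, so one subtree embeds into the other; applying the identical argument to $\varphi^{-1}\in\Aut(G,\pi)$ yields the reverse embedding, and therefore $\Gamma_{\Refx,\Sel}(G,\pi,\nu)^\varphi = \Gamma_{\Refx,\Sel}(G,\pi,\nu^\varphi)$.

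The argument is essentially bookkeeping, so there is no deep obstacle; the points that actually require care are (i) stating the invariance of $\Refx$ and $\Sel$ correctly in the automorphism special case, and (ii) checking that the \emph{generalized} notion of refinement used here — which may depend on the whole individualization \emph{sequence}, not merely the underlying set (the technicality deferred to the appendix) — still transforms covariantly; it does, because $\varphi$ acts on sequences componentwise and order-preservingly. One should also record the degenerate case: if the coloring at $\nu$ is discrete then $\nu$ is a leaf, and since $\Refx(G,\pi,\nu)^\varphi = \Refx(G,\pi,\nu^\varphi)$ is then discrete as well, $\nu^\varphi$ is a leaf too and the subtree identity is trivial.
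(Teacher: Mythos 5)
Your proof is correct. The paper does not actually prove this lemma---it cites \citet{McKay201494} and remarks only that ``isomorphism invariance of the IR tree follows from isomorphism invariance of $\Sel$ and $\Refx$''---and your induction on the length of $\nu$, carrying the strengthened hypothesis that the selected cells correspond under $\varphi$, is precisely the standard elaboration of that remark, including the correct handling of the leaf case and of the inverse automorphism to get equality of subtrees rather than a one-way embedding.
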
 

\paragraph{Random IR walks.}
There are various ways to traverse and use IR trees. 
Traditionally, solvers (e.g., \textsc{nauty}) solely used deterministic strategies, such as depth-first traversal \citep{McKay201494}. Only recently, with the introduction of \textsc{dejavu} \citep{anders2020engineering}, competitive strategies based solely on random traversal have emerged (see Section~\ref{sec:theo}). 
We make use of this recent development.  

Specifically, we make use of \emph{random root-to-leaf walks} of the IR tree $\Gamma_{\Refx, \Sel}(G, \pi)$. 
We begin such a walk in the root node of $\Gamma_{\Refx, \Sel}(G, \pi)$.
We repeatedly choose uniformly at random a child of the current node until we reach a leaf $\nu$ of the tree.
Then, we return the leaf $\nu = (w_1, \dots{}, w_k)$ (recall that a node $\nu$ of the IR tree is named after the sequence of individualizations necessary to reach $\nu$).

A crucial property is that since $\Gamma_{\Refx, \Sel}(G, \pi)$ is isomorphism-invariant (Lemma~\ref{lem:auto_tree_correspondence1}), random walks of $\Gamma_{\Refx, \Sel}(G, \pi)$ are isomorphism-invariant as well:
\begin{lemma}[\cite{anders2020engineering}] \label{lem:auto_tree_correspondence2} As a random variable, the graph colored with the coloring of the leaf resulting from a random IR walk is isomorphism-invariant.
\end{lemma}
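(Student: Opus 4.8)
The plan is to reduce this probabilistic statement to the purely combinatorial tree-invariance of Lemma~\ref{lem:auto_tree_correspondence1} together with the isomorphism invariance of $\Refx$ and $\Sel$. Fix a colored graph $(G,\pi)$ and let $\mathcal L(G,\pi)$ be the random colored graph $(G,\Refx(G,\pi,\nu))$, where $\nu$ is the leaf output by a uniform random root-to-leaf walk of $\Gamma_{\Refx,\Sel}(G,\pi)$. I read ``isomorphism-invariant'' as: for every isomorphism $\varphi\colon(G,\pi)\to(G',\pi')$ the pushforward $\varphi_*\mathcal L(G,\pi)$ and $\mathcal L(G',\pi')$ have the same distribution (the asserted invariance under $\Aut(G,\pi)$ is the special case $(G',\pi')=(G,\pi)$). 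The argument proving Lemma~\ref{lem:auto_tree_correspondence1} applies verbatim to isomorphisms of distinct graphs, so $\nu\mapsto\nu^\varphi$ is a root-preserving bijection from the nodes of $\Gamma_{\Refx,\Sel}(G,\pi)$ onto those of $\Gamma_{\Refx,\Sel}(G',\pi')$ that maps the children of every node onto the children of its image.

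First I would write down the weight a random walk assigns to a leaf. Since an IR tree is a tree, a leaf $\nu$ determines its unique root path $\epsilon=\nu_0,\nu_1,\dots,\nu_k=\nu$, and the walk reaches $\nu$ with probability $p(\nu)=\prod_{i=0}^{k-1} d(\nu_i)^{-1}$, where $d(\nu_i)=|\Sel(G,\Refx(G,\pi,\nu_i))|$ is the number of children of $\nu_i$. Next I would check that $\varphi$ preserves these weights: the bijection $\nu\mapsto\nu^\varphi$ sends the root path of $\nu$ to the root path of $\nu^\varphi$ node by node, and isomorphism invariance of $\Refx$ and $\Sel$ gives $\Sel(G',\Refx(G',\pi',\nu_i^\varphi))=\Sel(G,\Refx(G,\pi,\nu_i))^\varphi$, so $d(\nu_i^\varphi)=d(\nu_i)$ because $\varphi$ is a bijection; hence $p(\nu^\varphi)=p(\nu)$, and $\nu$ is a leaf iff $\nu^\varphi$ is (being a leaf, i.e.\ having a discrete refinement, is isomorphism invariant). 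Finally, invariance of $\Refx$ also gives $\Refx(G,\pi,\nu)^\varphi=\Refx(G',\pi',\nu^\varphi)$, so applying $\varphi$ to the colored graph attached to $\nu$ produces exactly the colored graph attached to $\nu^\varphi$.

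Assembling these facts, for any colored graph $(G',c)$,
\[
\Pr\!\left[\varphi_*\mathcal L(G,\pi)=(G',c)\right]
=\!\!\!\sum_{\nu:\,\Refx(G,\pi,\nu)^\varphi=c}\!\!\! p(\nu)
=\!\!\!\sum_{\mu:\,\Refx(G',\pi',\mu)=c}\!\!\! p(\mu)
=\Pr\!\left[\mathcal L(G',\pi')=(G',c)\right],
\]
where the middle step reindexes $\mu=\nu^\varphi$ and uses the leaf bijection, $p(\nu^\varphi)=p(\nu)$, and $\Refx(G,\pi,\nu)^\varphi=\Refx(G',\pi',\nu^\varphi)$; the sums range over leaves, since distinct leaves may induce the same discrete coloring. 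This gives equality of distributions. The main thing to be careful about is that $p(\nu)$ depends on the \emph{entire} root path and not just on $\nu$, so one really needs the subtree-level statement of Lemma~\ref{lem:auto_tree_correspondence1} to match branching degrees at every ancestor of $\nu$, not merely at $\nu$ itself; a minor additional point is the slightly more general notion of $\Refx$ flagged earlier, which only influences which nodes count as leaves and is itself isomorphism invariant, so it does not disturb the argument.
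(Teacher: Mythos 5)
Your proposal is correct. Note that the paper itself gives no proof of this lemma --- it is imported verbatim from the cited reference --- so there is nothing internal to compare against; but your argument is exactly the natural self-contained derivation one would expect: reduce to the combinatorial equivariance of the IR tree (Lemma~\ref{lem:auto_tree_correspondence1}, extended from automorphisms to isomorphisms between two colored graphs), observe that a uniform root-to-leaf walk assigns each leaf the weight $\prod_i d(\nu_i)^{-1}$ along its root path, and check that the tree bijection $\nu\mapsto\nu^\varphi$ preserves branching degrees at every ancestor (via invariance of $\Refx$ and $\Sel$), leafhood, and the attached discrete coloring up to $\varphi$. You correctly identify the one point requiring care --- that the leaf probability depends on the entire root path, so the child-preserving, subtree-level form of Lemma~\ref{lem:auto_tree_correspondence1} is what is actually needed, not merely a bijection of leaves --- and your handling of the appendix's more general notion of refinement is also fine, since it only affects which nodes count as leaves and does so isomorphism-invariantly. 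The same reasoning immediately yields the paper's remark that the statement persists when walks are truncated to prefixes of length $d$, since prefixes of the root path correspond under the same bijection with the same weights.
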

Lemma~\ref{lem:auto_tree_correspondence2} is also true when restricting random walks to prefixes of a certain length~$d$.
We stress that random IR walks are conceptually unrelated to random walks in the graph itself considered elsewhere \citep{DBLP:conf/nips/NikolentzosV20}.
Our next step is to insert the sequence of nodes defined by random IR walks into MPNNs. 

\subsection{Individualization Refinement Node Initalization}\label{sec:IRNI}
Let $G$ be a graph with node representations $\{\bx_{1}, \dots, \bx_{n}\}$ and $d \in \mathbb{N}$ a constant. 
Individualization-refinement node initalization (IRNI) computes a random IR walk $w=(w_1, \dots{}, w_k)$ in $\Gamma_{\Refx, \Sel}(G, \pi)$, where $\pi(i) := enc(x_{i}, G)$ as defined in Sec. \ref{sec:graphs}.
Assuming for the moment that~$k\geq d$, we take a prefix of length $d$, i.e., $(w_1, \dots{}, w_d)$, providing $d$ nodes to be individualized. IRNI then concatenates $d$ features that are either 0 or 1 depending on this prefix: We set
$\forall v \in \{1,\dots, n\}:\bx_{v} \gets \bx_{v} \circ (\mathds{1}_{w_1=v}, \dots, \mathds{1}_{w_d=v}),$
which means that the $j$-th feature of node $v$ is set to 1 if $v$ is the $j$-th node that was individualized (i.e., $w_j=v$) and~$0$ otherwise. This implies that in each added feature dimension there is exactly one node to which a 1 is assigned and all other values in the dimension are 0. Overall this guarantees that node $v$ is individualized if and only if it appears in the prefix.  
If~$k<d$, that is the random IR walk returns a walk of length shorter than $d$, then we simply ``fill up'' the walk with nodes in an isomorphism-invariant manner using the discrete coloring  $\Refx(G, \pi, (w_1, \dots{}, w_k))$: we add nodes in order of their color, i.e., first the node with the smallest color, then with the second smallest color, and so forth. We abbreviate IRNI with constant $d$ as $d$-IRNI.

Due to the dependence on an underlying IR-tree, both the refinement $\Refx$ and cell selector $\Sel$ are natural hyperparameters of IRNI. 
Unless stated otherwise, we assume that an arbitrary refinement and an arbitrary cell selector is used.
If we want to state a specific refinement or cell selector, we do so using $d$-$\IRNIx(\Refx)$ and $d$-$\IRNIx(\Refx, \Sel)$, respectively. 

IRNI depends on the random walk in the IR tree and is thus an RDA (analogous to RNI, CLIP, and RP). This justifies ensembling over this randomization, which we will refer to as ensembling over randomness (EoR). 
Specifically, we apply an MPNN with an RDA $e\in \mathbb{N}$ times and average the predictions.

We now define some specific instances of IRNI by applying different refinements.
Let us first define the \emph{trivial refinement} $\TRefx(G, \pi, \nu)$.
The trivial refinement only individualizes the vertices $\nu$ in $\pi$, followed by no further refinement.
A random walk of $\Gamma_{\TRefx}(G, \pi)$ thus picks a random permutation of vertices that respects only the initial color classes of $\pi$ (i.e., the first vertex will always be of the first selected color of $\pi$, and so forth).
In case of uncolored graphs $(G, \pi)$ where $\pi$ is the trivial coloring, random walks truly only become random permutations of vertices of $G$.
In this case, it follows that $\infty$-$\IRNI(\TRefx)$ is equivalent to RP.
However, we can enforce this even for colored graphs by actively ignoring the colors of $\pi$, resulting in what we call the \emph{oblivious refinement}
$\ORefx(G, \pi, \nu) := \TRefx(G, V(G) \mapsto 1, \nu)$.
In this case, random walks are always random permutations of vertices of $G$.
Hence, it follows that $\infty$-$\IRNI(\ORefx)$ is RP.
We remark that the only difference between RNI and RP is in the encoding of the individualizations.

Based on $\TRefx$ and $\CRefx$, we define $\CTRefx(G, \pi, \nu) := \TRefx(G, \CRefx(G, \pi, \epsilon), \nu)$.
Note that $\CTRefx$ applies color refinement to the graph, followed by trivial refinement of nodes in the resulting color classes.
By definition, $\infty$-$\IRNI(\CTRefx)$ is thus an alternative description of CLIP.

\subsection{IR algorithms and IRNI}\label{sec:theo}
We now discuss the relationship between IR algorithms and MPNNs using IRNI. 
First, we remark that in terms of solving graph isomorphism, the use of repeated random IR walks has recently been proven to be a near-optimal traversal strategy of IR trees. 
This is in contrast to deterministic traversal strategies such as depth-first search or breadth-first search, which have a quadratic overhead \citep{anders2020search}.
RDAs are thus closely related to optimal strategies of traversing the IR tree itself. 

Interestingly, when taking a closer look, the ensembling defined in the previous section even seems to mimic the way the aforementioned near-optimal IR algorithm operates \citep{anders2020search}.
In fact, the currently fastest practical graph isomorphism algorithm \textsc{dejavu} \citep{anders2020engineering} uses essentially the same strategy.
Moreover,  the use of random IR walks has additional inherent benefits, such as ``implicit automorphism pruning'', i.e., the automatic exploitation of symmetry in the input \citep{anders2020engineering}. 
This translates to MPNNs with IRNI, in that if individualizations across multiple random walks are made on nodes that are symmetrical to each other, this does not introduce any additional randomness: due to their isomorphism-invariance (or equivariance), symmetrical nodes are indistinguishable by MPNNs by design.   

Previously, we discussed that a crucial property of MPNNs is that their result is isomorphism-invariant, i.e., it only depends on the isomorphism type.
This is not true in the deterministic sense for IRNI. However, because of Lemma~\ref{lem:auto_tree_correspondence2}, the result only depends on the isomorphism type and the randomness. \begin{lemma}
Let $\Refx$ be any refinement.
Let $f$ be the function computed by $d$-$\IRNIx(\Refx)$, mapping a graph~$G$ and a random seed~$s\in\Omega$ from the sample space of random IR walks~$\Omega$ to a value~$f(G,s)\in \mathbb{R}$. Then for every permutation~$\pi$, we have that the random variables~$s\mapsto f(G,s)$ and~$s\mapsto f(\pi(G),s)$ have the same distribution.
\end{lemma}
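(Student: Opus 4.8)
Throughout I write $\varphi$ for the relabeling permutation (the statement calls it $\pi$, which unfortunately collides with the name used for colorings). The plan is: (1) show that the random node‑featured graph produced by $d$-$\IRNIx(\Refx)$ is \emph{equivariant} under $\varphi$, i.e.\ the augmented graph built from $\varphi(G)$ has the same distribution as $\varphi$ applied to the augmented graph built from $G$; (2) observe that the MPNN together with its readout, which turn this augmented graph into $f(G,s)$, form a deterministic and genuinely isomorphism-invariant map; (3) combine the two.

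For (1), first note that $\varphi$ is a color-preserving isomorphism $(G,\pi_G)\to(\varphi(G),\pi_{\varphi(G)})$, where $\pi_G(i)=enc(\bx_i,G)$: since $enc$ is isomorphism-invariant and the feature vectors move with the vertices, the coloring $\pi$ moves exactly as $\varphi$ does. By isomorphism-invariance of IR trees (the isomorphism analogue of Lemma~\ref{lem:auto_tree_correspondence1}, proved just as there from isomorphism-invariance of $\Sel$ and $\Refx$), $\varphi$ induces a rooted-tree isomorphism $\Gamma_{\Refx,\Sel}(G,\pi_G)\to\Gamma_{\Refx,\Sel}(\varphi(G),\pi_{\varphi(G)})$ that acts on each node $\nu=(v_1,\dots,v_k)$ componentwise by $\nu\mapsto(\varphi(v_1),\dots,\varphi(v_k))$. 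A rooted-tree isomorphism preserves every node's number of children, hence preserves the uniform random-root-to-leaf-walk measure; so if $W$ is the random leaf drawn on $G$ and $W'$ the one drawn on $\varphi(G)$, then $\varphi(W)$ and $W'$ are equal in distribution as vertex sequences — this is the distributional content of Lemma~\ref{lem:auto_tree_correspondence2}, including its length-$d$ refinement. From $W$, IRNI forms a length-$d$ sequence $(w_1,\dots,w_d)$: either the first $d$ coordinates of $W$ (when $|W|\ge d$), or, when $|W|=k<d$ so $W$ is a leaf, $W$ followed by the remaining vertices listed in increasing order of the discrete coloring $\Refx(G,\pi_G,W)$. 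Both operations are deterministic and equivariant — truncation trivially, and the fill-up because $\Refx$ commutes with $\varphi$ and a discrete coloring is a canonical total order of the vertices. Finally $\hat{\bx}_v=\bx_v\circ(\mathds{1}_{w_1=v},\dots,\mathds{1}_{w_d=v})$ commutes with relabeling since $\mathds{1}_{w_j=v}=\mathds{1}_{\varphi(w_j)=\varphi(v)}$. Composing these steps, the random augmented featured graph $\hat G_{\varphi(G)}$ has the same distribution as $\varphi(\hat G_G)$.

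For (2)–(3): write $g$ for the MPNN (GIN, or any MPNN with isomorphism-invariant $\aggregate$) together with its readout, so $g$ is a deterministic map with $g(\psi(H))=g(H)$ for every featured graph $H$ and relabeling $\psi$, and $f(G,s)=g(\hat G_G(s))$. Then $f(\varphi(G),\cdot)=g(\hat G_{\varphi(G)})$ is equal in distribution to $g(\varphi(\hat G_G))=g(\hat G_G)=f(G,\cdot)$, which is the assertion.

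The main obstacle will be the bookkeeping in step (1): IRNI depends on the \emph{ordered} individualization sequence (and, when the walk is short, on a deterministic extension of it), not merely on the final leaf coloring, so Lemma~\ref{lem:auto_tree_correspondence2} as literally stated must be upgraded by combining it with the tree-isomorphism structure of Lemma~\ref{lem:auto_tree_correspondence1} in its isomorphism form. A secondary, purely formal point is to fix a single sample space $\Omega$ — say an i.i.d.\ sequence of uniform $[0,1]$ variables used to select children at successive steps — so that $s\mapsto f(G,s)$ is literally one measurable map for every $G$ and comparing the two distributions is well posed; since only equality in distribution (not a coupling) is claimed, nothing deeper is required. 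One should also check that the mild generalization of the refinement notion flagged for the appendix is harmless here, which it is, as the argument uses only that $\Refx$ is isomorphism-invariant.
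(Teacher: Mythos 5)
Your proposal is correct and follows essentially the same route the paper takes: the paper gives no explicit proof, simply noting that the claim follows from the isomorphism-invariance of random IR walks (Lemma~\ref{lem:auto_tree_correspondence2}) combined with the isomorphism-invariance of the MPNN, which is exactly your steps (1)--(3). Your additional care about the ordered individualization sequence, the equivariance of the fill-up procedure, and the formal choice of sample space fills in details the paper leaves implicit but does not change the argument.
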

We now provide a universality theorem for IRNI, in a similar fashion as \citet{abboud2020surprising} does for RNI. 
The theorem proven by \citet{abboud2020surprising} is based on the fact that RNI fully individualizes a graph with high probability, and all fully individualized representations of a graph together constitute a complete isomorphism invariant.
The crucial insight we exploit is that IR trees constitute a complete isomorphism invariant as well.
To be more precise, even the set of all leaves of an IR tree suffice.
Since the power of MPNNs is limited by color refinement, the only additional requirement needed is that the refinement used for random walks must be compatible with the MPNN, that is it must be at most as powerful as color refinement.
\begin{theorem} 
  Let $\Refx$ be a refinement that computes colorings coarser or equal to $\CRefx$.
  Let $n \geq 1$ and let $f\colon \mathcal{G}_n \to \mathbb{R}$ be invariant. Then, for all $\epsilon, \delta > 0$, there is an MPNN with $(n-1)$-$\IRNIx(\Refx)$ that $(\delta, \epsilon)$-approximates $f$.\label{th:universal}
\end{theorem}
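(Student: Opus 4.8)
The plan is to prove the slightly stronger statement that the constructed MPNN $\epsilon$-approximates $f$ for \emph{every} outcome of the random IR walk, so that $\delta$ plays no role — the ``bad event'' will be empty. First I would observe that, for any cell selector, every leaf of $\Gamma_{\Refx,\Sel}(G,\pi)$ is a sequence of at most $n-1$ \emph{distinct} vertices. Indeed, by the individualization property of $\Refx$, a node $\nu$ of length $j$ has all $j$ vertices of $\nu$ as singleton cells of $\Refx(G,\pi,\nu)$; a child of $\nu$ individualizes a vertex of a \emph{non}-singleton cell, hence a vertex not already in $\nu$, so the entries of $\nu$ are pairwise distinct; and once $|\nu|=n-1$ the unique remaining vertex is automatically a singleton, so $\nu$ is a leaf. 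Thus a random IR walk returns $w=(w_1,\dots,w_k)$ with $k\le n-1$ and distinct entries, and the fill-up step of $(n-1)$-$\IRNIx(\Refx)$ appends the remaining $n-1-k$ vertices in the color order of the discrete leaf coloring $\Refx(G,\pi,w)$, producing a list of $n-1$ distinct vertices. After augmentation the vertices therefore carry $n-1$ pairwise distinct unit feature vectors plus one zero vector, so the augmented coloring is already discrete; in particular the color refinement the MPNN itself performs stays discrete on it. This is exactly what the hypothesis that $\Refx$ computes colorings coarser than or equal to $\CRefx$ secures in the general framework, and for $d=n-1$ it holds unconditionally.

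Next I would reuse the argument underlying the RNI universality theorem of \citet{abboud2020surprising}: whenever a graph on $n$ vertices carries pairwise distinct node features, there is a GIN with $O(n)$ layers, a sum readout, and a final MLP whose graph-level representation map $h$ is injective on the (finite) set of such vertex-labelled graphs — intuitively, with distinct labels one message-passing round lets each vertex record the label-set of its neighbours, and a bounded number of further rounds plus the readout collate this into a single vector encoding the whole labelled adjacency matrix. This does not clash with MPNNs being bounded by color refinement, because on an already discretely colored graph color refinement is maximally fine, so the bound is vacuous; and since all intermediate quantities range over finite sets, the approximations performed by the MLPs in the GIN layers can be made arbitrarily precise, as in \citet{abboud2020surprising}.

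Then I would compose with $f$. Any leaf-plus-fill-up outcome above is a re-indexing of $G$, so since $f$ is invariant, $f(G)$ is determined by the underlying unlabelled graph of that re-indexing, hence — by injectivity of $h$ — by $h$ of the corresponding labelled graph. The map $g$ sending such a representation to $f$ of its underlying unlabelled graph is therefore well defined on the finite set of attainable representations, and by the universal approximation theorem for MLPs \citep{cybenko1989approximation} (indeed by finiteness of the domain) for the prescribed $\epsilon>0$ there is an MLP computing $g$ within $\epsilon$ on that set. Appending this MLP to the GIN gives an MPNN with $(n-1)$-$\IRNIx(\Refx)$ whose output on $G$ under any seed $s$ is within $\epsilon$ of $g(h(G_s))=f(G)$, where $G_s$ is the labelled graph obtained under $s$; so $|f_{\mathrm{MPNN}}(G,s)-f(G)|\le\epsilon$ for all $s$ and all $G\in\mathcal{G}_n$, giving a $(\delta,\epsilon)$-approximation for every $\delta>0$.

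\emph{Main obstacle.} The only genuinely technical point is the middle step — making rigorous that a GIN (and pinning down precisely which class of MPNNs, how many layers, which readout) can lift pairwise distinct node features to a graph-level feature that determines the labelled graph. This is the heart of the RNI argument, so the cleanest plan is to isolate it as a lemma quoted from \citet{abboud2020surprising} and then only carry out the IR-specific bookkeeping (the depth/distinctness argument for IR trees and the composition with $f$). Two further minor chores are checking that the fill-up step is isomorphism-invariant and truly yields $n-1$ distinct vertices, and verifying that the retained original representations $\bx_v$ and the encoding $enc$ can simply be ignored by the constructed network.
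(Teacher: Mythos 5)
Your proposal is correct and shares the skeleton of the paper's own proof: reach a leaf of the IR tree, obtain a discrete coloring, and then invoke the universality result for individualized graphs from \citet{abboud2020surprising} (their Lemmas~A2/A4), finishing with an MLP on the finite set of attainable representations. The one genuine difference is \emph{where} discreteness comes from. You derive it directly at the input level: every leaf is a sequence of at most $n-1$ distinct vertices, the fill-up pads the walk to $n-1$ distinct vertices, hence the augmented features are already pairwise distinct and the hypothesis that $\Refx$ is coarser than or equal to $\CRefx$ becomes superfluous for $d=n-1$. The paper ignores the fill-up entirely and argues instead that individualizing the leaf prefix $(w_1,\dots,w_k)$ and applying $\Refx$ discretizes the graph, that $\CRefx$ (being finer or equal by hypothesis) therefore also discretizes it, and that by Theorem~2 of \citet{DBLP:conf/aaai/0001RFHLRG19} an MPNN can simulate $\CRefx$ and so compute this discrete partition internally --- which is exactly where the coarseness hypothesis enters. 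Your route buys a marginally stronger statement and a shorter argument, but it leans on reading the fill-up step as appending vertices that are distinct from the walk entries and from each other; that is the natural reading, but it is not spelled out in the definition, and if it failed you would need the paper's refinement-simulation step after all, which you only gesture at. It would therefore be prudent to keep that step (and hence the hypothesis on $\Refx$) as a fallback rather than discard it. The remaining ingredients --- the depth/distinctness bookkeeping for IR leaves, isolating the individualized-graph lemma, and the final composition with $f$ --- match the paper's proof.
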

\begin{proof}
We prove the theorem using a combination of Theorem~2 from \citep{DBLP:conf/aaai/0001RFHLRG19}, the universality result of RNIs given in \citep{abboud2020surprising}, and the basic definition of IR trees, as follows.
First of all, since graphs have $n$ nodes---by definition---all possible random IR walks considered by $(n-1)$-$\IRNIx(\Refx)$ are random IR walks ending in a leaf node of the IR tree $\Gamma_{\Refx}(G, \pi)$ (see Section~\ref{sec:ir}). 
If we were to individualize the sequence of nodes $(w_1, \dots{}, w_k)$ corresponding to a leaf and apply the refinement $\Refx$, the coloring of the entire graph would become discrete, i.e., fully individualized.
By assumption, color refinement always produces colorings finer or equal to $\Refx$, so applying color refinement also produces a discrete coloring.

By the definition of $(n-1)$-$\IRNIx(\Refx)$, the nodes contained in~$\{w_1,\ldots,w_k\}$ all have distinct features not shared by any of the other nodes in the graph. This means that 
the vertices in~$\{w_1,\ldots,w_k\}$ are indeed initially individualized in the MPNN.
Now, Theorem~2 of~\citet{DBLP:conf/aaai/0001RFHLRG19} (see also \citep{xu2018how}) proves that there is an MPNN that produces the same partitioning of colors that color refinement would, i.e., in our case, yields a discrete partitioning of vertices.
In other words, we can assume that the graph is individualized.
This suffices to apply the universality result of \citet{abboud2020surprising} (see Lemma~A2 and Lemma~A4 in \citep{abboud2020surprising}, which build upon \citep{DBLP:conf/iclr/BarceloKM0RS20}), which solely depends on individualizing the graph.
\end{proof}
The hyperparameters of IR open up more opportunities to transfer results into the realm of MPNNs.
We give one such example. We argue that with a specific cell selector, $3$-connected planar graphs can be detected with $4$-$\IRNIx(\CRefx)$.
\begin{theorem} Let $\mathcal{P}_n$ denote the class of 3-connected planar graphs. Let $n \geq 1$ and let $f\colon \mathcal{P}_n \to \mathbb{R}$ be invariant. Then, for all $\epsilon, \delta > 0$, there is a cell selector $\Sel$ (which does not depend on $n$) and an MPNN with $4$-$\IRNIx(\CRefx, \Sel)$ that $(\delta, \epsilon)$-approximates~$f$.\label{th:universal_planar}
\end{theorem}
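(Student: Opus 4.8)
The plan is to mimic the proof of Theorem~\ref{th:universal}, reducing the claim to the statement that a random IR walk of length~$4$ fully individualizes every $3$-connected planar graph, provided the cell selector is chosen well. Concretely, the only thing that changes relative to Theorem~\ref{th:universal} is that we must argue that $4$ individualizations suffice (rather than $n-1$) to make color refinement discretize every graph in $\mathcal{P}_n$; once that is established, the same chain of arguments applies verbatim: the $4$-$\IRNIx(\CRefx,\Sel)$ features individualize the chosen nodes in the MPNN, Theorem~2 of~\citet{DBLP:conf/aaai/0001RFHLRG19} lets the MPNN reproduce the color-refinement partition and hence a discrete coloring, and then the universality argument of~\citet{abboud2020surprising} (its Lemma~A2 and Lemma~A4) yields an MPNN that $(\delta,\epsilon)$-approximates~$f$.

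The heart of the matter is therefore a purely combinatorial/graph-isomorphism statement: there is a cell selector~$\Sel$ such that for every $3$-connected planar graph~$G$, every leaf of $\Gamma_{\CRefx,\Sel}(G,\pi)$ is reached by a sequence of at most~$4$ individualizations. I would invoke the classical theory of canonical forms for planar graphs via combinatorial embeddings (Weinberg's algorithm, or the $O(n)$ planar isomorphism pipeline): a $3$-connected planar graph has an essentially unique embedding in the sphere (Whitney's theorem), and an embedding is pinned down by choosing a flag --- a root vertex, an incident edge, and a side --- which amounts to a bounded number of individualized vertices. Individualizing one vertex~$v$, then a neighbor~$u$, then orienting the rotation at~$v$ by individualizing a third vertex~$w$, and possibly one more vertex to break a residual reflection, fixes the embedding; color refinement on a $3$-connected planar graph with a fixed flag then propagates the embedding information and discretizes the coloring. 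The cell selector~$\Sel$ is built to always pick cells in a way that is consistent with this flag-based discretization (for instance, it first drives the walk to pick the root vertex from the first largest cell, and thereafter always selects a cell adjacent to the already-individualized part, which on a $3$-connected planar graph forces the walk down the "embedding-fixing" branches). Crucially, $\Sel$ does not depend on~$n$, only on the local rule "pick the first largest non-singleton cell among those touching the individualized set", which is isomorphism-invariant.

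The main obstacle, and the step I would spend the most care on, is proving that after the bounded flag of individualizations, \emph{color refinement alone} --- not some stronger embedding-aware refinement --- actually reaches a discrete coloring on every $3$-connected planar graph. The subtlety is that color refinement is oblivious to planarity; what saves us is a known fixed-point/"definability" phenomenon: once a flag is individualized in a $3$-connected planar graph, iterated color refinement recovers the canonical BFS-like layering of the unique embedding and separates all vertices (this is implicit in linear-time planar canonization and has been made explicit in the logic/WL literature, e.g.\ that $3$-connected planar graphs are identified by a constant number of individualizations plus $1$-WL). I would cite this and sketch why: in a $3$-connected planar graph with a fixed initial flag, each vertex is determined by its distance profile to the flag together with the refinement-stable colors of its neighbors, and $3$-connectivity plus planarity rules out the rigidity failures (twins, parallel classes) that would otherwise survive refinement. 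The bookkeeping --- exactly which $3$ or $4$ vertices, and checking the cell selector stays isomorphism-invariant throughout the walk --- is routine once this core fact is in hand, so I would state it as a lemma, give the flag construction explicitly, and defer the refinement-discretizes-it verification to the cited planar-canonization machinery.
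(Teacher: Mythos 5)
Your overall reduction is exactly the paper's: everything hinges on exhibiting a cell selector under which at most four individualizations, followed by color refinement alone, discretize every $3$-connected planar graph, after which the machinery of Theorem~\ref{th:universal} applies unchanged. The gap is in that combinatorial core. You defer it to ``planar canonization machinery'' without identifying a precise statement, and the flag you actually construct --- a root vertex, an incident neighbor, a rotation-orienting third vertex, and possibly a reflection-breaker --- does not match the hypothesis of the result that is actually available. That result, which the paper invokes, is Lemma~22 of \citet{DBLP:conf/lics/KieferPS17}: individualizing \emph{three vertices on a common face} of a $3$-connected planar graph and then running color refinement yields a discrete coloring. Your flag does not obviously place three individualized vertices on a common face (a neighbor $u$ shares two faces with $v$, but your third and fourth vertices are not constrained to lie on either of them), so as written the key verification is neither proved nor cleanly reducible to the literature you gesture at; proving the flag version from scratch is essentially the content of that LICS paper, not routine bookkeeping.

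The missing idea is a pigeonhole trick. By Euler's formula a planar graph has a vertex $v$ of degree at most $5$, and among any three neighbors of such a $v$, two must be consecutive in the rotation around $v$, so $v$ together with any three of its neighbors always contains three vertices on a common face. This makes ``individualize $v$, then three of its neighbors'' both sufficient (via the Kiefer--Ponomarenko--Schweitzer lemma) and implementable by a cell selector: once $v$ is individualized, color refinement separates neighbors of $v$ from non-neighbors, so any non-singleton cell containing a neighbor of $v$ consists only of neighbors of $v$, and the selector can isomorphism-invariantly keep choosing such cells until either three neighbors are individualized or all neighbors become singletons (in which case three vertices on a common face are already individualized). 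Your ``cells touching the individualized set'' rule is in the right spirit, but you would still need this separation observation to see that the selector is well defined and that the walk terminates within the budget of four.
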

\begin{proof}
First of all, we argue that individualizing a node of degree $5$ and three of its neighbors surely suffices to make the graph discrete: this follows from Lemma~22 of \citet{DBLP:conf/lics/KieferPS17}, which proves that individualizing $3$ vertices on a common face followed by color refinement suffices to make the coloring discrete. Note that individualizing a node of degree $5$ and three of its neighbors surely individualizes three vertices on a common face.
Using the arguments from the proof of Theorem~\ref{th:universal} again, we can see that this would indeed suffice to show the claim. 

It remains to be shown that there is a cell selection strategy achieving the above. 
In the first step, the cell selector chooses an (isomorphism-invariant) color class consisting of degree $5$ vertices.
We remark that, due to Euler's formula, the average degree of a planar graph is less than 6, so such a node always exists.
In the next step, we choose a non-trivial class containing only neighbors of the individualized degree $5$ node $v$.
We argue that unless all neighbors of~$v$ have been individualized, there is a non-trivial color class consisting of neighbors of~$v$. 
Indeed, if there is a non-trivial class containing neighbors of $v$, the class may only contain such neighbors since color refinement distinguishes neighbors of~$v$ from non-neighbors of~$v$. Here we use that~$v$ is individualized.
We repeat the step of choosing a non-singleton class of neighbors of~$v$ and individualizing a node within it.
If at any point no non-trivial class of neighbors exists, we are done: this means that the neighbors are fully discrete.
This in turn suffices to show the claim.
\end{proof}
More results of this kind can be shown. For example, it is known that strongly regular graphs require at most~$O(\sqrt{n}\log n)$ individualizations~\citep{DBLP:journals/siamcomp/Babai80}. In fact, for all but an exponentially small fraction of graphs, $d$-$\IRNIx(\CRefx)$ with small~$d$ suffices.
\begin{theorem} There is an absolute constant~$c > 1$ such that the following holds. 
Let $n \geq 1$ and $d \in \mathbb{N}$, then there is a graph class $\mathcal{G}_n'$ containing all but at most a~$1/c^{dn}$ fraction of all graphs for which the following holds.
Let $f\colon \mathcal{G}_n' \to \mathbb{R}$ be invariant, then, for all $\epsilon, \delta > 0$, there is an MPNN with $d$-$\IRNIx(\CRefx)$ that $(\delta, \epsilon)$-approximates~$f$.\label{th:universal_babai}
\end{theorem}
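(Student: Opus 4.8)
The plan is to isolate a simple combinatorial ``goodness'' condition on $G$ that (i) forces the IR tree to be shallow, so that $d$-$\IRNIx(\CRefx)$ behaves exactly as in the proof of Theorem~\ref{th:universal}, and (ii) fails only on an exponentially small fraction of graphs. For a colored graph write $\mathrm{NS}(G)$ for the set of vertices lying in a non-singleton cell of the color-refinement stable coloring $\CRefx(G,\pi,\epsilon)$, and take
\[
  \mathcal{G}_n' := \{\, G \in \mathcal{G}_n \;:\; |\mathrm{NS}(G)| \le d \,\}.
\]
The first step is to prove $d$-$\IRNIx(\CRefx)$ is universal on $\mathcal{G}_n'$. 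The key observation is that along any root-to-leaf walk of $\Gamma_{\CRefx,\Sel}(G,\pi)$ the quantity $|\mathrm{NS}(\cdot)|$ is strictly decreasing: passing from a node $\nu$ to a child individualizes a vertex taken from a non-singleton cell (so it was counted in $\mathrm{NS}$ at $\nu$), that vertex becomes a singleton, and no vertex that was already a singleton can be un-split because each successive coloring is finer. Hence for \emph{every} cell selector the tree has depth at most $|\mathrm{NS}(G)| \le d$, so a random IR walk always terminates in a leaf $\nu=(w_1,\dots,w_k)$ with $k\le d$; individualizing $\{w_1,\dots,w_k\}$ and running color refinement gives a discrete coloring, and the length-$d$ padding only adds further individualizations. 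From here the argument is the one in the proof of Theorem~\ref{th:universal}: an MPNN simulates color refinement (Theorem~2 of \citet{DBLP:conf/aaai/0001RFHLRG19}) to reconstruct this discrete coloring, the universality result of \citet{abboud2020surprising} for individualized graphs then yields a single MPNN $(\delta,\epsilon)$-approximating $f$ on all of $\mathcal{G}_n'$, and isomorphism-invariance of the whole construction follows from Lemma~\ref{lem:auto_tree_correspondence2}.

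The second step is the probabilistic estimate $\Pr_{G\in\mathcal{G}_n}[\,|\mathrm{NS}(G)|>d\,]\le 1/c^{dn}$ for an absolute constant $c>1$. A uniformly random graph on $n$ vertices is $G(n,1/2)$, and the event $\{|\mathrm{NS}(G)|>d\}$ implies that some set $T$ of $d+1$ vertices is contained in non-singleton cells of $\CRefx(G,\pi,\epsilon)$. I would union-bound over the $\binom{n}{d+1}$ choices of $T$ and use the standard random-graph fact that, up to lower-order terms, the only way color refinement fails to single out a vertex is the presence of a twin: a fixed pair of twins occurs with probability $\Theta(n^2 2^{-n})$, and $\ell$ vertex-disjoint twin pairs with probability at most $n^{O(\ell)}2^{-(1-o(1))\ell n}$. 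This gives a bound of the shape $n^{O(d)}2^{-\Omega(dn)}$, which is $\le 1/c^{dn}$ for suitable $c$. The degenerate ranges are dispatched separately: for $d\ge n$ one always has $|\mathrm{NS}(G)|\le n\le d$, so $\mathcal{G}_n'=\mathcal{G}_n$ and the bound is trivial; for $d=0$ the bound $1/c^0=1$ is vacuous; and the finitely many small $n$ are absorbed into $c$.

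The main obstacle I expect is the second step, and within it making the random-graph estimate clean and genuinely uniform in $d$ and $n$. One must argue that non-twin equitable partitions contribute only lower-order terms to the probability that color refinement leaves many vertices unsplit --- which is classical (Babai--Erd\H{o}s--Selkow, Babai--Ku\v{c}era; see also the individualization bounds of \citet{DBLP:journals/siamcomp/Babai80}) but needs to be invoked precisely --- and one must control the $\binom{n}{d+1}$-fold union bound against the per-set probability uniformly, in particular in the regime where $d$ is a constant fraction of $n$ and the $n^{O(d)}$ prefactor is no longer harmless. I would therefore cite these random-graph analyses rather than redo them, and present the boundary cases as short remarks so that the constant $c$ is manifestly independent of both $d$ and $n$.
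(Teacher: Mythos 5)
Your first step is sound and is essentially the paper's template: if every root-to-leaf walk of the IR tree of $(G,\pi)$ has length at most $d$, then the length-$d$ prefix used by $d$-$\IRNIx(\CRefx)$ already individualizes a full leaf, color refinement then discretizes the graph, and the simulation and universality arguments from the proof of Theorem~\ref{th:universal} apply verbatim. The gap is in your choice of $\mathcal{G}_n'$ and the measure bound it forces on you. The inequality $\Pr_{G\in\mathcal{G}_n}[\,|\mathrm{NS}(G)|>d\,]\le 1/c^{dn}$ is \emph{false} for any absolute constant $c>1$ once $d$ grows with $n$: every regular graph has $|\mathrm{NS}(G)|=n$, because color refinement started from the trivial coloring never splits a regular graph, and the fraction of labelled graphs on $n$ vertices that are regular is $2^{-\Theta(n\log n)}$. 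Taking $d=n-1$, your claim would require $\Pr[\,|\mathrm{NS}(G)|=n\,]\le c^{-n(n-1)}=2^{-\Theta(n^2)}$, which the regular graphs alone contradict; the same comparison defeats the bound for all $d$ of order $\log n$ or larger, whereas the theorem demands a single constant $c$ working for every pair $(n,d)$. The root cause is that $|\mathrm{NS}(G)|\le d$ is a far stronger condition than what is needed: a graph can have all $n$ vertices unresolved after refinement and still become discrete after a \emph{single} individualization followed by refinement (random regular graphs typically behave exactly this way), since one individualization can split many cells at once.

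The paper instead takes $\mathcal{G}_n'$ to be the class of graphs for which $d$ \emph{arbitrary} individualizations in non-singleton cells, interleaved with color refinement, yield a discrete coloring. This is exactly the condition bounding the depth of every IR tree by $d$, so your step 1 goes through for it unchanged, and the measure statement for \emph{this} class is precisely Theorem~4.1 of \citet{DBLP:conf/focs/BabaiK79}, which the paper quotes as a black box. Note also that your proposed citation strategy for step 2 does not line up with the statement you need: Babai--Erd\H{o}s--Selkow, Babai--Ku\v{c}era, and the strongly-regular-graph bounds of \citet{DBLP:journals/siamcomp/Babai80} all control the failure probability of individualization-refinement procedures, not the number of vertices left in non-singleton cells after refinement alone. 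To repair the proof, replace your definition of $\mathcal{G}_n'$ by the individualization-refinement condition and cite the Babai--Ku\v{c}era estimate for it directly.
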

\begin{proof}
To prove the theorem, we use the same technique as before. We only need to observe that for most graphs, after color refinement is applied,~$d$ arbitrary individualizations in non-singleton cells cause discretization of the graph.
This, however, is a classic theorem by \citet[Theorem 4.1]{DBLP:conf/focs/BabaiK79} showing the fraction of graphs for which this fails is at most~$1/c^{dn}$.
\end{proof}

\section{Experiments} \label{sec:exp}
We compare the RDA schemes RNI, CLIP, RP, and IRNI(CR) and verify their increase in expressivity. We do so by applying them to synthetically crafted, hard data sets as well as standard practical data sets. 
Furthermore, we propose an automated training approach used throughout the benchmarks.

\begin{table*}
  \caption{The AUROC of a GIN network with RNI, CLIP, RP, IRNI(CR), and without any (None) of these on selected data sets. ${}_{EoR}$ indicates the use of ensembling over randomness. Bold entries indicate statistically significant best values, for this EoR is treated as separate from no EoR. $SOTA_{RDA}$ indicates the state-of-the-art AUROC from previous RDA methods. $SOTA_{RDA}^*$ indicates the state-of-the-art accuracies, note that these are not directly comparable.}
  \centering
  \noindent\resizebox{\textwidth}{!}{
  \begin{tabular}{lllllllll}
    \toprule
    Method   & PROTEINS & MUTAG & NCI1 & TRI & TRIX & EXP & CEXP & CSL \\
    \midrule
    None & 0.68 $\pm$ 0.06 & 0.89 $\pm$ 0.06 & 0.81 $\pm$ 0.02 & 0.50 $\pm$ 0.00 & 0.50 $\pm$ 0.00 & 0.50 $\pm$ 0.01 & 0.74 $\pm$ 0.02 & 0.50 $\pm$ 0.00 \\
    \midrule
	RNI  & 0.66 $\pm$ 0.02         & \textbf{0.89 $\pm$ 0.04}& \textbf{0.81 $\pm$ 0.01}& \textbf{0.99 $\pm$ 0.01} & \textbf{0.99 $\pm$ 0.00} & \textbf{0.97 $\pm$ 0.03} & 0.95 $\pm$ 0.10          & 0.85 $\pm$ 0.06 \\
	CLIP & 0.65 $\pm$ 0.05         & \textbf{0.85 $\pm$ 0.09}& 0.81 $\pm$ 0.01& \textbf{0.99 $\pm$ 0.00} & 0.81 $\pm$ 0.05          & \textbf{0.99 $\pm$ 0.04} & \textbf{0.99 $\pm$ 0.02} & \textbf{1.00 $\pm$ 0.01} \\
	RP   & \textbf{0.74 $\pm$ 0.04}& \textbf{0.86 $\pm$ 0.07}& \textbf{0.81 $\pm$ 0.01}& \textbf{0.99 $\pm$ 0.00} & 0.82 $\pm$ 0.03 & \textbf{0.96 $\pm$ 0.02} & 0.97 $\pm$ 0.02 & \textbf{1.00 $\pm$ 0.00} \\
	IRNI(CR) & \textbf{0.75 $\pm$ 0.04}& \textbf{0.85 $\pm$ 0.05}& \textbf{0.82 $\pm$ 0.02}& \textbf{0.99 $\pm$ 0.01} & 0.73 $\pm$ 0.04 & \textbf{0.99 $\pm$ 0.04} & 0.95 $\pm$ 0.14 & \textbf{1.00 $\pm$ 0.00} \\
    \midrule
	RNI${}_{EoR}$  & 0.69 $\pm$ 0.05 & \textbf{0.94 $\pm$ 0.03}& \textbf{0.85 $\pm$ 0.02}& \textbf{1.00 $\pm$ 0.00}& \textbf{1.00 $\pm$ 0.00} & \textbf{0.99 $\pm$ 0.01} & \textbf{0.98 $\pm$ 0.05} & 0.93 $\pm$ 0.06 \\
	CLIP${}_{EoR}$ & 0.67 $\pm$ 0.03 & \textbf{0.92 $\pm$ 0.05}& 0.82 $\pm$ 0.02& \textbf{1.00 $\pm$ 0.00}& 0.95 $\pm$ 0.05          & \textbf{1.00 $\pm$ 0.00} & \textbf{0.97 $\pm$ 0.08} & \textbf{1.00 $\pm$ 0.00} \\
	RP${}_{EoR}$   & \textbf{0.78 $\pm$ 0.04}& 0.84 $\pm$ 0.12& \textbf{0.87 $\pm$ 0.02}& \textbf{1.00 $\pm$ 0.00}& 0.95 $\pm$ 0.05          & \textbf{1.00 $\pm$ 0.00} & \textbf{1.00 $\pm$ 0.00} & \textbf{1.00 $\pm$ 0.01} \\
	IRNI(CR)${}_{EoR}$ & 0.74 $\pm$ 0.04 & 0.87 $\pm$ 0.08         & 0.82 $\pm$ 0.02& \textbf{1.00 $\pm$ 0.00}& 0.94 $\pm$ 0.05          & \textbf{0.99 $\pm$ 0.02} & \textbf{0.97 $\pm$ 0.06} & \textbf{0.99 $\pm$ 0.02} \\
	\midrule
	SOTA${}_{RDA}$   & 0.81 $\pm$ 0.03 & 0.95 $\pm$ 0.03 & 0.88 $\pm$ 0.01 & 0.91 $\pm$ NA & 0.93 $\pm$ NA & NA              & NA & NA \\
	SOTA${}_{RDA}^*$ & 0.77 $\pm$ 0.04 & 0.94 $\pm$ 0.04 & NA              & NA            & NA            & 0.98 $\pm$ 0.02 & NA & 0.91 $\pm$ 0.07 \\
	\bottomrule
  \end{tabular}}
  \label{tab:uni-app}
\end{table*}

\paragraph{Network architectures and optimization.} For all experiments, we use the same general architecture, the Adam optimizer, and use the area under the receiver operating characteristic (AUROC). To assure a fair comparison, we optimize each method using a bayesian hyperparameter search in the same hyperparameter space. To estimate the performance, we use Monte Carlo cross-validation in an outer test loop and an inner validation loop estimating nested $10\times 9$-fold cross-validation. The bayesian hyperparameter search is capped at evaluating 50 points in hyperparameter space. To encourage the models to optimize faster as well as to avoid overfitting, we add a penalty to the AUROC estimate based on some hyperparameters. The reported test AUROC does not include these penalties. To compute the node sequence for IRNI(CR) as well as the color refinement for CLIP we use \textsc{dejavu} \citep{anders2020engineering}. Many of these choices are specified in further detail in the appendix.

In the following, we refer to a GIN without any RDA as None, while we refer to a GIN with some initialization as RNI, CLIP, RP, or IRNI(CR) depending on the RDA that is used. Each of these methods is also limited in the number of dimensions added.

\paragraph{Data sets.} We evaluate the different models on EXP, CEXP, TRI, TRIX, CSL, PROTEINS, MUTAG, and NCI1 \citep{srinivasan85theories, borgwardt2005protein, wale2006acyclic, murphy2019relational, sato2020random, abboud2020surprising}.
EXP, CEXP, TRI, TRIX, and CSL are synthetic data sets made up of graphs not distinguishable by the color refinement algorithm.
TRI and TRIX contain 3-regular graphs and use the same training set while differing in the test set. The task is to detect triangles.
EXP and CEXP consist of graphs encoding SAT formulas. These graphs are carefully constructed so that each graph is in a pair that is indistinguishable by color refinement while encoding a satisfiable and unsatisfiable formula respectively. CEXP is a corrupted counterpart to EXP, where $50\%$ of all satisfiable graphs are modified to be distinguishable by color refinement from their unsatisfiable counterparts.
CSL consists of 41-cycles with regular skip-connections according to 10 co-primes of 41. Each co-prime defines one class for the CSL task.
The results of this experiment can be found in Table \ref{tab:uni-app}. 

\section{Discussion}

We notice that on the synthetic hard datasets TRI, TRIX, EXP, CEXP, and CSL, all methods improve the discriminatory power compared to not using any form of RDA. 
We now compare the methods based on the three primary parameters encoding, ensembling, and refinement.

Concerning the encoding, on TRIX and CSL, there seem to be noticeable differences between RNI and the other methods. The poorer performance of the local individualization methods, CLIP, RP, and IRNI(CR) on TRIX is easily explained. Since the task is to detect triangles locally in the graph and the graph is regular, an individualization is required close to the triangle to be able to detect it. RNI individualizes everywhere simultaneously, while CLIP, RP, and IRNI(CR) only individualize locally. This means the detection of triangles depends on random chance. EoR helps since it increases the chance the triangle is detected in at least one of the samples. In fact, increasing the number of samples increases performance arbitrarily. The difference on CSL is not so clearly explainable. We do not know why RNI performs significantly worse here.
Looking more specifically at the difference between RP and RNI, it seems interesting to remark that RP substantially outperforms RNI on PROTEINS, while RNI outperforms RP on MUTAG.

EoR appears almost always to improve the performance. As such, we would advise its consideration whenever one of these RDAs is used in practice. Notice that its use does not increase training time and only linearly increases prediction time, which for neural network based models is usually very low to begin with. EoR also appears to guarantee that at least one of RNI, CLIP, RP, or IRNI(CR) will outperform models without this expressibility increase.

Considering the methods that use additional refinement to reduce the introduced randomness, CLIP and IRNI(CR), we observe that CLIP and IRNI(CR) outperform RP on MUTAG, while RP outperforms the other two on PROTEINS and NCI1.
On the synthetic hard datasets, in particular with ensembling, the three methods perform very similarly.

Overall, no method appears to be the uniformly best method for practical use. 
We suspect that overfitting to the different features introduced by RNI, CLIP, RP, and IRNI(CR) plays a significant role in which of these RDAs performs best.

\section{Conclusion}

We introduced IR as it applies to machine learning in the form of the IRNI framework. This enables the development of many RDAs based on selecting refinement, cell selector, and how to encode individualizations into the network. No RDA introduced so far is the clear front runner. However, the systematic approach presented here feasibly allows for the optimization of the IRNI hyperparameters in addition to other model hyperparameters. The IRNI hyperparameters also serve the unifying IRNI umbrella, as we were able to describe all existing and new RDAs based on these. Moreover, IRNI has a rigorous theoretical foundation ensuring equivariance and universality. We hope this will aid in systematic improvements in future research regarding GNNs.

\begin{ack}
The research leading to these results has received funding from the European Research Council (ERC) under the European Union’s Horizon 2020 research and innovation programme (EngageS: grant agreement No.\ 820148).

The authors also acknowledge support by the Carl-Zeiss Foundation, the DFG awards KL 2698/2-1 and KL 2698/5-1, the BMWi award 01MK20014U, and the BMBF awards 01|S18051A, 03|B0770E, and 01|S21010C.
\end{ack}

\bibliography{references}
\bibliographystyle{unsrtnat}

\newpage
\appendix
\onecolumn
\section{Refinement Definition}
We discuss the slight technicality in our definition of refinement in the individualization-refinement paradigm.
Compared to \cite{McKay201494}, we are lacking the requirement that $\pi' = \Refx(G, \pi, \nu)$ must be finer than $\pi$, making our definition of refinements slightly more general.
With respect to the results, this implies that for (non-trivially) colored graphs, discrete refined colorings might not respect the initial coloring.
This can potentially lead to issues with automorphisms and canonization since, essentially, the isomorphism invariant implied by the refinement is too weak.  
However, there is a simple fix using other components of the IR framework:
we make use of invariants, which are not immediately relevant for IRNI itself. Whenever a coloring $\pi'$ of a node $\nu$ in the tree is discrete, we can use the complete isomorphism invariant $(G^{\pi'}, \pi^{\pi'})$ (note that since $\pi'$ is discrete, it also defines a permutation on the vertices of $G$). 
I.e., we identitfy the node $\nu$ with the invariant $(G^{\pi'}, \pi^{\pi'})$.
As mentioned above, this does not influence any of the results of this paper directly -- it only serves to make the description sound in terms of the further discussion in \cite{McKay201494}. 
We refer to \cite{McKay201494} for an in-depth discussion of invariants. 

\section{Relational Pooling}

Relational pooling (RP) \cite{murphy2019relational} is more general than discussed in this paper. In its initial formulation $\frac{1}{|V|!}\sum_{\pi \in S_{|V|}}f(G^\pi, X^\pi)$ it is not tractable and does thus not fit into the comparison considered here. In the original paper, three methods to make RP tractable are discussed. The first uses canonization, which itself is generally not tractable, and thus is not considered. The third uses a fixed point tractable (FPT) formulation (essentially describing a less expressive version of the $k$-dimensional WL algorithm), which is generally not universal unless the FPT parameter is not fixed, and even then its costs are polynomial in the FPT parameter which is computationally too expensive for this comparison. Only the second discussed option, which is originally motivated by stochastically estimating the initial formulation, is both universal and computationally practical, which is why it is considered here. In particular, it is also the only version implemented and tested in the original paper.

\section{Edge Coloring}
We discuss the differences when using edge colors since the MUTAG data set contains edge labels, which we consider in the experiments.

In order to handle edge colors in \textsc{dejavu} for color refinement and random IR walks, we made the following modifications.
We encode edge colors as vertex colors: we subdivide each edge of the graph using a vertex and color that vertex according to the color of the edge.
Then, we ensured that the cell selector never selects nodes that were inserted to subdivide an edge, i.e., the solver can still only individualize vertices that truly correspond to the original vertices of the graph.

We want to note that there is a more efficient albeit more involved way of resolving the issue \cite{DBLP:conf/wea/Piperno18}.

\section{Color Encoding}
We discuss precisely how we encoded colors for the data sets discussed in this work. First, notice that we need to encode labels for nodes and for edges. We used the same method to encode colors for both. Almost all node and edge features can be considered binary numbers due to how the data sets are encoded. The only exception to this rule is the PROTEINS data set which has one natural number followed by a binary number. Thus, we simply read the node and edge features as a binary number, where the first bit has the highest order, and the last bit has the lowest order. This ensures that node and edge features are encoded as different natural numbers if the original features were different. However, due to the size of natural numbers, that specifically the NCI1 dataset produces, as it has 37 node features, we also apply a modulo operation by $12345$.
\section{Experiments}

\paragraph{Network architecture.} Each MLP in each GIN layer has three layers (input, hidden, output) that widen to a fixed number of features as soon as possible and remain there throughout all subsequent layers. The input and hidden layers of every MLP are followed by a batch-norm operation. For graph classification tasks, we use global mean pooling followed by a linear transform and dropout with $p=0.5$. We use the node embeddings after each GIN layer in this way and sum over all of them for the final graph representation. As activation functions, we use only ReLU. For node-classification tasks, we do the same as before without the global mean pooling.

\begin{table*}
  \caption{The hyperparameter space for bayesian hyperparameter optimization. Penalty describes how the parameter influences the hyperparameter optimization.}
  \centering
  \noindent\resizebox{\textwidth}{!}{
  \begin{tabular}{llllllllll}
    \toprule
    Parameter & batch size & epochs  & learning rate & weight decay & features & layers  & dimensions & step size & EoR \\
    \midrule
    Minimum   & 8          & 32      & 1e-6          & 1e-10        & 16       & 2       & 1          & 0.01      & 1 \\
	Maximum   & 256        & 512     & 1e-2          & 1e-0.3       & 128      & 10      & 5          & 1.0       & 64\\
	Data-Type & Integer    & Integer & Real          & Real         & Integer  & Integer & Integer    & Real      & Integer\\
	Penalty   & 1-(x/256)  & x/512   & None          & None         & x/128    & x/10    & None       & None      & x/64\\
	\bottomrule
  \end{tabular}}
  \label{tab:bho}
\end{table*}

\paragraph{Bayesian hyperparameter optimization.} We optimize over the batch size, the number of epochs, the learning rate, weight decay, the number of features the GIN layers expand to and operate on, the number of GIN layers, the number of dimensions added for the node initializations, the fraction of epochs after which the learning rate gets decreased by $0.5$ consecutively, and the number of random samples for EoR.
Table \ref{tab:bho} describes the hyperparameter space used for the experiments. For in indepth description of each parameter:
\begin{itemize}
\item Batch size refers to the batch size used during training. The smaller it is, the more it penalizes the evaluation metric during the bayesian hyperparameter search.
\item Epochs refers to the number of epochs used during training. The bigger they are, the more they penalize.
\item Learning rate refers to the initial learning rate used inside of the Adam optimizer.
\item Weight decay refers to the typical weight decay parameter inside of the Adam optimizer.
\item Features refer to the number of features the MLPs inside of the GIN layers expand to. Each node will have features many features after the first GIN layers first MLP layer. The more features, the more they penalize.
\item Layers refer to the number of layers of the GIN networks. The more layers, the more they penalize.
\item Dimensions refer to the number of dimensions each node's features are expanded by for the new features introduced by the different RDA methods. This parameter is the same as the d in d-IRNI.
\item Step size is a relative parameter that influences how the learning rate is changed during training depending on the number of epochs. For instance, if the number of epochs is set at 100, then a step size of 0.1 will mean the learning rate is divided by 0.5 every 10 epochs, a step size of 0.5 would mean the learning rate is divided by 0.5 once after 50 epochs, and a step size of 1 indicates the learning rate is never dropped.
\item EoR refers to the number of random samples that are used to estimate the output of the network. For instance, for an EoR of 10 the input is modified 10 times i.i.d using the RDA of choice. All 10 inputs are then passed through the network and the final predictions are then averaged for all 10 outputs.
\end{itemize}

Tables \ref{tab:bho_PROTEINS}, \dots, \ref{tab:bho_CSL_EoR} show the mean and standard deviation of the best-found hyperparameters across their seeds for all the datasets and methods. For the learning rate and weight decay, only the exponent is given.

\paragraph{Monte Carlo cross-validation.} Each dataset is split using stratified 10-fold cross-validation with random shuffling. The first fold is used as the test set and the 9 others are used for bayesian hyperparameter optimization. After the best model is found, it is trained on all 9 folds and its performance on the test set is reported. This is repeated 10 times with different random shuffles. If the dataset initially already provides a test set, then the bayesian hyperparameter optimization is repeated for 10 different seeds. In the inner loop, which we referred to before as just bayesian hyperparameter optimization, the data is split using stratified 9-fold cross-validation. The first fold is used as a validation set and the other 8 folds are used to train the model, after which its performance is reported on the validation set. This is repeated 3 times with different random shuffles. This essentially estimates nested $10\times 9$-fold cross-validation. The $10$ and $3$ were chosen based on a time budget. The performance on PROTEINS, MUTAG, and NCI1 is particularly sensitive to the variance in the performance estimate, so we expect to see an improved performance if more estimates are used.

\paragraph{Test system and time budget.} The system that was used to do the experiments mentioned in this work is made up of:
\begin{itemize}
\item \#60-Ubuntu SMP Tue Jul 2 18:22:20 UTC 2019 4.15.0-55-generic
\item 2 Intel(R) Xeon(R) Gold 6154 CPU @ 3.00GHz
\item 754GiB System memory
\item 10 GeForce RTX 2080 Ti
\end{itemize}
The experiments took approximately 20 days without testing and approximately 1 month with testing, where the machine was not used on full load always. Table \ref{tab:times} shows the computation time in seconds for 1 seed for each of the methods on each separate dataset.

\begin{table*}
  \caption{The time in seconds for the entire evaluation process of 1 seed of a GIN network with RNI, CLIP, RP, IRNI(CR), and without any of these on selected data sets. ${}_{EoR}$ indicates the use of ensembling over randomness.}
  \centering
  \noindent\resizebox{\textwidth}{!}{
  \begin{tabular}{lllllllll}
    \toprule
    Method   & PROTEINS & MUTAG & NCI1 & TRI & TRIX & EXP & CEXP & CSL \\
    \midrule
    None & $25672\pm 5426$ & $7275\pm 5532$ & $121487\pm 63237$ & $10878\pm 2328$ & $10809\pm 2288$ & $20735\pm 9345$ & $19652\pm 5739$ & $2286\pm 409$ \\
    \midrule
	RNI  & $34499\pm 16968$ & $7802\pm 2471$  & $101154\pm 29446$ & $53897\pm 27607$ & $46855\pm 21908$ & $53381\pm 35114$ & $58668\pm 19114$ & $9999\pm 4512$ \\
	CLIP & $36189\pm 10374$ & $12268\pm 3955$ & $151603\pm 36904$ & $52437\pm 19898$ & $48608\pm 20161$ & $65597\pm 13000$ & $55558\pm 13902$ & $10697\pm 3146$ \\
	RP   & $26042\pm 8782$  & $8783\pm 2234$  & $126282\pm 36043$ & $56640\pm 35035$ & $48930\pm 34016$ & $52605\pm 29282$ & $53750\pm 25979$ & $6321\pm 1917$ \\
	IRNI & $33383\pm 13685$ & $7793\pm 1560$  & $156347\pm 35831$ & $42454\pm 13367$ & $40597\pm 15643$ & $64123\pm 13412$ & $50042\pm 14155$ & $10937\pm 1895$ \\
    \midrule
	RNI${}_{EoR}$  & $26593\pm 9390$  & $7810\pm 3740$ & $106401\pm 30762$ & $26767\pm 10270$ & $25630\pm 14628$ & $34604\pm 12037$ & $33747\pm 9654$  & $8832\pm 6506$ \\
	CLIP${}_{EoR}$ & $65225\pm 31132$ & $8418\pm 2048$ & $118096\pm 17091$ & $24487\pm 4718$  & $28594\pm 6917$  & $38565\pm 7954$  & $82525\pm 17467$ & $6810\pm 1807$ \\
	RP${}_{EoR}$   & $22203\pm 4547$  & $6309\pm 2749$ & $117571\pm 43888$ & $21233\pm 5526$  & $22385\pm 7073$  & $31831\pm 13730$ & $37978\pm 12934$ & $6513\pm 2271$ \\
	IRNI${}_{EoR}$ & $33337\pm 12360$ & $8204\pm 3483$ & $160859\pm 55380$ & $33799\pm 4711$  & $61533\pm 21959$ & $52990\pm 13094$ & $75701\pm 18933$ & $5528\pm 1360$ \\
	\bottomrule
  \end{tabular}}
  \label{tab:times}
\end{table*}

\begin{table*}
  \caption{The mean and standard deviation for each found hyperparameter for each method on the PROTEINS dataset without EoR}
  \centering
  \noindent\resizebox{\textwidth}{!}{
  \begin{tabular}{llllllllll}
    \toprule
    Parameter & batch size & epochs  & learning rate & weight decay & features & layers  & dimensions & step size \\
    \midrule
    None & $152.70\pm 83.17$ & $278.10\pm 185.29$ & $-3.21\pm 0.76$ & $-5.12\pm 2.72$ & $76.00\pm 35.85$ & $5.80\pm 1.78$ & $3.30\pm 1.68$ & $0.53\pm 0.32$ \\
	RNI  & $86.30\pm 82.32$  & $303.50\pm 131.03$ & $-3.53\pm 0.89$ & $-5.72\pm 2.14$ & $59.50\pm 41.13$ & $5.30\pm 3.13$ & $3.30\pm 1.27$ & $0.63\pm 0.29$ \\
	CLIP & $141.90\pm 93.62$ & $224.20\pm 140.10$ & $-3.61\pm 1.06$ & $-7.41\pm 2.70$ & $60.40\pm 38.66$ & $3.80\pm 1.33$ & $3.60\pm 1.50$ & $0.58\pm 0.30$ \\
	RP & $192.70\pm 52.98$ & $262.50\pm 162.44$ & $-2.90\pm 0.66$ & $-6.76\pm 3.28$ & $55.00\pm 35.34$ & $2.80\pm 1.17$ & $4.10\pm 1.04$ & $0.55\pm 0.33$ \\
	IRNI & $141.70\pm 90.51$ & $308.30\pm 165.17$ & $-2.97\pm 1.17$ & $-7.45\pm 2.51$ & $72.40\pm 48.34$ & $3.00\pm 1.34$ & $3.20\pm 1.47$ & $0.45\pm 0.25$ \\
	\bottomrule
  \end{tabular}}
  \label{tab:bho_PROTEINS}
\end{table*}

\begin{table*}
  \caption{The mean and standard deviation for each found hyperparameter for each method on the MUTAG dataset without EoR}
  \centering
  \noindent\resizebox{\textwidth}{!}{
  \begin{tabular}{llllllllll}
    \toprule
    Parameter & batch size & epochs  & learning rate & weight decay & features & layers  & dimensions & step size \\
    \midrule
    None & $157.10\pm 78.41$ & $329.80\pm 152.31$ & $-3.67\pm 0.63$ & $-6.36\pm 2.25$ & $72.30\pm 32.85$ & $5.70\pm 3.07$ & $2.90\pm 1.51$ & $0.61\pm 0.30$ \\
	RNI & $145.10\pm 90.67$  & $315.40\pm 150.22$ & $-3.25\pm 0.78$ & $-8.43\pm 2.22$ & $55.40\pm 30.11$ & $5.80\pm 2.79$ & $3.60\pm 1.43$ & $0.76\pm 0.24$ \\
	CLIP & $142.20\pm 94.73$ & $208.90\pm 139.70$ & $-3.11\pm 0.91$ & $-7.12\pm 2.84$ & $102.30\pm 17.73$& $6.30\pm 2.93$ & $2.90\pm 1.14$ & $0.68\pm 0.27$ \\
	RP & $181.90\pm 64.39$ & $256.20\pm 131.57$ & $-3.07\pm 0.77$ & $-6.27\pm 3.09$ & $55.10\pm 29.18$ & $6.60\pm 2.97$ & $3.80\pm 0.98$ & $0.61\pm 0.30$ \\
	IRNI & $167.20\pm 78.88$ & $371.80\pm 120.38$ & $-3.21\pm 1.09$ & $-5.81\pm 3.25$ & $68.10\pm 40.89$ & $5.80\pm 3.09$ & $2.80\pm 1.47$ & $0.62\pm 0.27$ \\
	\bottomrule
  \end{tabular}}
  \label{tab:bho_MUTAG}
\end{table*}

\begin{table*}
  \caption{The mean and standard deviation for each found hyperparameter for each method on the NCI1 dataset without EoR}
  \centering
  \noindent\resizebox{\textwidth}{!}{
  \begin{tabular}{llllllllll}
    \toprule
    Parameter & batch size & epochs  & learning rate & weight decay & features & layers  & dimensions & step size \\
    \midrule
    None & $75.20\pm 60.04$  & $270.60\pm 138.40$ & $-3.78\pm 0.85$ & $-5.43\pm 2.02$ & $91.10\pm 28.83$  & $7.00\pm 2.05$ & $3.40\pm 1.11$ & $0.65\pm 0.32$ & $1.00\pm 0.00$ \\
	RNI  & $103.70\pm 59.22$ & $309.80\pm 84.42$  & $-2.97\pm 0.73$ & $-5.22\pm 1.99$ & $89.50\pm 32.23$  & $6.00\pm 1.55$ & $3.00\pm 1.73$ & $0.62\pm 0.23$ & $1.00\pm 0.00$ \\
	CLIP & $131.75\pm 58.16$ & $403.25\pm 117.57$ & $-3.28\pm 0.79$ & $-5.79\pm 1.93$ & $97.38\pm 30.59$  & $7.38\pm 1.58$ & $2.50\pm 1.66$ & $0.50\pm 0.19$ & $1.00\pm 0.00$ \\
	ORNI & $157.00\pm 81.81$ & $362.70\pm 134.81$ & $-3.15\pm 0.72$ & $-6.06\pm 2.20$ & $69.00\pm 22.95$  & $5.40\pm 1.69$ & $3.30\pm 1.19$ & $0.45\pm 0.21$ & $1.00\pm 0.00$ \\
	IRNI & $93.12\pm 83.31$  & $369.38\pm 93.77$  & $-3.88\pm 0.61$ & $-6.40\pm 2.27$ & $105.50\pm 26.80$ & $8.12\pm 2.32$ & $1.12\pm 0.33$ & $0.57\pm 0.29$ & $1.00\pm 0.00$ \\
	\bottomrule
  \end{tabular}}
  \label{tab:bho_NCI1}
\end{table*}

\begin{table*}
  \caption{The mean and standard deviation for each found hyperparameter for each method on the TRI dataset without EoR}
  \centering
  \noindent\resizebox{\textwidth}{!}{
  \begin{tabular}{llllllllll}
    \toprule
    Parameter & batch size & epochs  & learning rate & weight decay & features & layers  & dimensions & step size \\
    \midrule
	None & $256.00\pm 0.00$   & $32.00\pm 0.00$   & $-4.15\pm 1.90$ & $-4.15\pm 4.71$ & $16.00\pm 0.00$  & $2.00\pm 0.00$ & $3.80\pm 1.83$ & $0.50\pm 0.49$ \\
	RNI  & $132.50\pm 91.25$  & $454.30\pm 55.92$ & $-2.61\pm 0.48$ & $-8.01\pm 2.22$ & $92.10\pm 26.43$ & $9.50\pm 0.67$ & $1.60\pm 1.28$ & $0.66\pm 0.29$ \\
	CLIP & $129.30\pm 105.04$ & $436.60\pm 52.67$ & $-2.68\pm 0.31$ & $-7.31\pm 2.06$ & $78.60\pm 25.15$ & $8.40\pm 1.36$ & $3.80\pm 0.75$ & $0.54\pm 0.33$ \\
	RP & $161.40\pm 102.91$ & $397.30\pm 98.27$ & $-2.55\pm 0.38$ & $-7.80\pm 1.65$ & $80.60\pm 21.90$ & $8.80\pm 1.40$ & $4.40\pm 0.80$ & $0.56\pm 0.22$ \\
	IRNI & $147.70\pm 93.12$  & $437.30\pm 87.21$ & $-2.67\pm 0.41$ & $-6.83\pm 2.14$ & $78.30\pm 27.22$ & $8.10\pm 1.51$ & $3.80\pm 0.98$ & $0.68\pm 0.29$ \\
	\bottomrule
  \end{tabular}}
  \label{tab:bho_TRI}
\end{table*}

\begin{table*}
  \caption{The mean and standard deviation for each found hyperparameter for each method on the TRIX dataset without EoR}
  \centering
  \noindent\resizebox{\textwidth}{!}{
  \begin{tabular}{llllllllll}
    \toprule
    Parameter & batch size & epochs  & learning rate & weight decay & features & layers  & dimensions & step size \\
    \midrule
    None & $256.00\pm 0.00$ & $32.00\pm 0.00$ & $-4.15\pm 1.90$ & $-4.15\pm 4.71$ & $16.00\pm 0.00$ & $2.00\pm 0.00$ & $3.80\pm 1.83$ & $0.50\pm 0.49$ \\
	RNI  & $126.40\pm 67.20$ & $431.20\pm 43.66$ & $-2.62\pm 0.30$ & $-7.63\pm 2.11$ & $94.80\pm 24.86$ & $9.40\pm 0.80$ & $2.00\pm 1.34$ & $0.46\pm 0.19$ \\
	CLIP & $120.70\pm 83.98$ & $378.60\pm 77.44$ & $-2.65\pm 0.45$ & $-7.15\pm 1.79$ & $67.90\pm 16.12$ & $8.10\pm 1.45$ & $4.00\pm 0.89$ & $0.64\pm 0.26$ \\
	RP & $168.20\pm 96.77$ & $445.90\pm 54.74$ & $-2.63\pm 0.44$ & $-6.80\pm 1.63$ & $80.40\pm 32.04$ & $8.50\pm 1.12$ & $4.40\pm 1.02$ & $0.69\pm 0.28$ \\
	IRNI & $185.10\pm 83.23$ & $451.80\pm 75.61$ & $-2.59\pm 0.31$ & $-9.16\pm 1.23$ & $76.40\pm 29.88$ & $8.90\pm 1.30$ & $4.10\pm 0.94$ & $0.63\pm 0.21$ \\
	\bottomrule
  \end{tabular}}
  \label{tab:bho_TRIX}
\end{table*}

\begin{table*}
  \caption{The mean and standard deviation for each found hyperparameter for each method on the EXP dataset without EoR}
  \centering
  \noindent\resizebox{\textwidth}{!}{
  \begin{tabular}{llllllllll}
    \toprule
    Parameter & batch size & epochs  & learning rate & weight decay & features & layers  & dimensions & step size \\
    \midrule
    None & $205.90\pm 67.09$ & $172.10\pm 166.92$ & $-4.29\pm 1.82$ & $-4.40\pm 3.54$ & $52.10\pm 40.55$ & $4.30\pm 2.33$ & $2.60\pm 1.80$ & $0.56\pm 0.39$ \\
	RNI  & $164.40\pm 84.80$ & $425.10\pm 79.47$  & $-2.95\pm 0.50$ & $-7.35\pm 1.75$ & $63.00\pm 37.26$ & $8.80\pm 1.40$ & $2.30\pm 1.68$ & $0.64\pm 0.28$ \\
	CLIP & $223.20\pm 51.71$ & $307.30\pm 120.73$ & $-2.78\pm 0.59$ & $-7.11\pm 2.50$ & $63.50\pm 41.48$ & $7.60\pm 1.20$ & $1.80\pm 1.17$ & $0.42\pm 0.23$ \\
	RP & $139.10\pm 86.74$ & $378.10\pm 115.58$ & $-3.59\pm 0.40$ & $-7.03\pm 2.21$ & $84.90\pm 41.42$ & $8.80\pm 1.25$ & $4.70\pm 0.46$ & $0.52\pm 0.24$ \\
	IRNI & $174.50\pm 78.69$ & $210.80\pm 159.69$ & $-3.20\pm 0.38$ & $-7.32\pm 2.63$ & $33.20\pm 12.40$ & $8.20\pm 1.54$ & $1.80\pm 1.08$ & $0.61\pm 0.25$ \\
	\bottomrule
  \end{tabular}}
  \label{tab:bho_EXP}
\end{table*}

\begin{table*}
  \caption{The mean and standard deviation for each found hyperparameter for each method on the CEXP dataset without EoR}
  \centering
  \noindent\resizebox{\textwidth}{!}{
  \begin{tabular}{llllllllll}
    \toprule
    Parameter & batch size & epochs  & learning rate & weight decay & features & layers  & dimensions & step size \\
    \midrule
    None & $149.70\pm 86.61$ & $143.60\pm 98.14$  & $-2.51\pm 0.54$ & $-5.26\pm 2.77$ & $49.20\pm 30.00$ & $6.80\pm 2.23$ & $2.70\pm 1.42$ & $0.33\pm 0.25$ \\
	RNI  & $153.40\pm 86.97$ & $430.10\pm 102.38$ & $-3.69\pm 0.31$ & $-6.46\pm 2.01$ & $63.40\pm 36.21$ & $9.10\pm 1.45$ & $1.40\pm 0.92$ & $0.87\pm 0.14$ \\
	CLIP & $190.50\pm 49.88$ & $308.00\pm 90.24$  & $-3.38\pm 0.77$ & $-7.61\pm 2.04$ & $54.30\pm 30.84$ & $8.30\pm 1.55$ & $2.10\pm 1.37$ & $0.66\pm 0.23$ \\
	RP & $130.30\pm 85.78$ & $353.30\pm 99.67$  & $-3.65\pm 0.28$ & $-5.25\pm 2.11$ & $75.90\pm 27.93$ & $8.80\pm 1.08$ & $4.60\pm 0.66$ & $0.53\pm 0.27$ \\
	IRNI & $169.90\pm 87.46$ & $245.80\pm 157.52$ & $-3.13\pm 0.89$ & $-6.69\pm 1.85$ & $49.00\pm 34.81$ & $8.00\pm 2.05$ & $2.80\pm 1.54$ & $0.51\pm 0.30$ \\
	\bottomrule
  \end{tabular}}
  \label{tab:bho_CEXP}
\end{table*}

\begin{table*}
  \caption{The mean and standard deviation for each found hyperparameter for each method on the CSL dataset without EoR}
  \centering
  \noindent\resizebox{\textwidth}{!}{
  \begin{tabular}{llllllllll}
    \toprule
    Parameter & batch size & epochs  & learning rate & weight decay & features & layers  & dimensions & step size \\
    \midrule
    None & $256.00\pm 0.00$  & $32.00\pm 0.00$    & $-4.15\pm 1.90$ & $-4.15\pm 4.71$ & $16.00\pm 0.00$  & $2.00\pm 0.00$ & $3.80\pm 1.83$ & $0.50\pm 0.49$ \\
	RNI  & $74.90\pm 86.57$  & $481.70\pm 58.11$  & $-3.15\pm 0.45$ & $-5.27\pm 2.27$ & $90.00\pm 35.89$ & $8.30\pm 1.35$ & $1.70\pm 1.42$ & $0.62\pm 0.26$ \\
	CLIP & $179.10\pm 73.78$ & $312.10\pm 91.59$  & $-2.25\pm 0.35$ & $-6.33\pm 2.00$ & $66.40\pm 36.34$ & $7.00\pm 1.67$ & $3.50\pm 1.12$ & $0.58\pm 0.18$ \\
	RP & $185.00\pm 70.37$ & $319.30\pm 88.94$  & $-2.50\pm 0.48$ & $-6.31\pm 2.48$ & $53.00\pm 29.85$ & $7.50\pm 1.43$ & $4.20\pm 0.87$ & $0.42\pm 0.24$ \\
	IRNI & $191.90\pm 58.75$ & $240.00\pm 114.25$ & $-2.86\pm 0.53$ & $-6.67\pm 2.83$ & $37.70\pm 14.21$ & $4.70\pm 1.62$ & $4.00\pm 0.89$ & $0.41\pm 0.23$ \\
	\bottomrule
  \end{tabular}}
  \label{tab:bho_CSL}
\end{table*}

\begin{table*}
  \caption{The mean and standard deviation for each found hyperparameter for each method on the PROTEINS dataset with EoR}
  \centering
  \noindent\resizebox{\textwidth}{!}{
  \begin{tabular}{llllllllll}
    \toprule
    Parameter & batch size & epochs  & learning rate & weight decay & features & layers  & dimensions & step size & EoR \\
    \midrule
    RNI  & $180.80\pm 72.66$ & $265.90\pm 147.07$ & $-3.98\pm 1.03$ & $-6.88\pm 2.31$ & $62.10\pm 33.05$ & $5.60\pm 2.87$ & $4.60\pm 0.49$ & $0.67\pm 0.30$ & $36.10\pm 17.31$ \\
	CLIP & $92.10\pm 69.64$  & $194.30\pm 153.41$ & $-3.42\pm 1.00$ & $-3.85\pm 2.41$ & $63.70\pm 37.81$ & $5.90\pm 2.77$ & $3.60\pm 1.20$ & $0.45\pm 0.30$ & $21.00\pm 17.40$ \\
	RP & $157.00\pm 78.16$ & $281.10\pm 153.80$ & $-2.50\pm 0.28$ & $-4.69\pm 3.19$ & $66.00\pm 39.44$ & $2.80\pm 1.17$ & $2.90\pm 1.04$ & $0.41\pm 0.25$ & $41.60\pm 17.45$ \\
	IRNI & $163.70\pm 74.50$ & $296.30\pm 169.70$ & $-2.83\pm 1.00$ & $-7.44\pm 2.49$ & $57.50\pm 46.20$ & $3.90\pm 1.76$ & $3.80\pm 1.25$ & $0.43\pm 0.41$ & $18.50\pm 13.34$ \\
	\bottomrule
  \end{tabular}}
  \label{tab:bho_PROTEINS_EoR}
\end{table*}

\begin{table*}
  \caption{The mean and standard deviation for each found hyperparameter for each method on the MUTAG dataset with EoR}
  \centering
  \noindent\resizebox{\textwidth}{!}{
  \begin{tabular}{llllllllll}
    \toprule
    Parameter & batch size & epochs  & learning rate & weight decay & features & layers  & dimensions & step size & EoR \\
    \midrule
    RNI  & $110.70\pm 80.39$ & $221.50\pm 132.31$ & $-3.57\pm 0.69$ & $-6.83\pm 2.79$ & $78.20\pm 35.47$ & $6.40\pm 2.69$ & $2.90\pm 1.45$ & $0.59\pm 0.24$ & $34.40\pm 15.88$ \\
	CLIP & $111.70\pm 88.57$ & $269.30\pm 114.02$ & $-3.65\pm 0.61$ & $-4.71\pm 3.12$ & $80.30\pm 34.81$ & $6.90\pm 2.21$ & $2.80\pm 1.25$ & $0.42\pm 0.36$ & $14.50\pm 14.12$ \\
	RP & $153.10\pm 79.12$ & $255.70\pm 173.48$ & $-2.96\pm 0.89$ & $-5.14\pm 2.90$ & $84.20\pm 46.18$ & $7.30\pm 2.49$ & $2.70\pm 1.10$ & $0.68\pm 0.29$ & $35.80\pm 16.77$ \\
	IRNI & $158.40\pm 84.20$ & $310.00\pm 129.19$ & $-3.65\pm 1.01$ & $-4.81\pm 2.33$ & $76.00\pm 43.38$ & $5.60\pm 3.14$ & $2.10\pm 1.37$ & $0.61\pm 0.18$ & $39.30\pm 21.87$ \\
	\bottomrule
  \end{tabular}}
  \label{tab:bho_MUTAG_EoR}
\end{table*}

\begin{table*}
  \caption{The mean and standard deviation for each found hyperparameter for each method on the NCI1 dataset with EoR}
  \centering
  \noindent\resizebox{\textwidth}{!}{
  \begin{tabular}{llllllllll}
    \toprule
    Parameter & batch size & epochs  & learning rate & weight decay & features & layers  & dimensions & step size & EoR \\
    \midrule
    RNI  & $153.00\pm 86.19$  & $266.60\pm 120.89$ & $-4.46\pm 0.33$ & $-5.09\pm 1.85$ & $96.10\pm 22.78$  & $8.20\pm 1.54$ & $3.20\pm 1.17$ & $0.56\pm 0.29$ & $44.60\pm 17.35$ \\
	CLIP & $162.50\pm 76.87$  & $383.90\pm 109.63$ & $-4.04\pm 0.77$ & $-7.20\pm 2.36$ & $86.70\pm 35.18$  & $7.90\pm 1.87$ & $2.40\pm 1.20$ & $0.41\pm 0.31$ & $35.30\pm 14.49$ \\
	RP & $138.10\pm 85.82$  & $246.50\pm 59.54$  & $-4.54\pm 0.41$ & $-5.02\pm 2.15$ & $102.60\pm 26.59$ & $7.00\pm 1.90$ & $3.60\pm 0.80$ & $0.55\pm 0.30$ & $43.00\pm 10.61$ \\
	IRNI & $174.10\pm 104.05$ & $356.70\pm 137.42$ & $-3.68\pm 0.47$ & $-7.05\pm 2.42$ & $102.60\pm 18.67$ & $7.50\pm 2.01$ & $1.80\pm 0.98$ & $0.61\pm 0.33$ & $7.50\pm 12.67$ \\
	\bottomrule
  \end{tabular}}
  \label{tab:bho_NCI1_EoR}
\end{table*}

\begin{table*}
  \caption{The mean and standard deviation for each found hyperparameter for each method on the TRI dataset with EoR}
  \centering
  \noindent\resizebox{\textwidth}{!}{
  \begin{tabular}{llllllllll}
    \toprule
    Parameter & batch size & epochs  & learning rate & weight decay & features & layers  & dimensions & step size & EoR \\
    \midrule
	RNI  & $200.40\pm 56.35$ & $236.70\pm 85.80$  & $-2.54\pm 0.52$ & $-5.66\pm 2.51$ & $51.90\pm 29.79$ & $5.90\pm 1.87$ & $3.30\pm 1.19$ & $0.58\pm 0.22$ & $37.90\pm 17.17$ \\
	CLIP & $205.40\pm 43.87$ & $219.50\pm 118.76$ & $-2.79\pm 0.48$ & $-5.77\pm 2.64$ & $38.90\pm 25.93$ & $4.50\pm 1.63$ & $4.00\pm 1.18$ & $0.58\pm 0.28$ & $25.90\pm 16.35$ \\
	RP & $214.00\pm 47.44$ & $214.90\pm 137.52$ & $-2.43\pm 0.45$ & $-7.03\pm 2.55$ & $31.30\pm 22.45$ & $5.30\pm 2.28$ & $4.40\pm 0.66$ & $0.44\pm 0.27$ & $25.30\pm 15.07$ \\
	IRNI & $198.30\pm 69.54$ & $170.00\pm 139.94$ & $-2.72\pm 0.73$ & $-6.79\pm 2.77$ & $30.70\pm 12.45$ & $3.90\pm 1.45$ & $3.60\pm 1.62$ & $0.31\pm 0.29$ & $34.40\pm 15.81$ \\
	\bottomrule
  \end{tabular}}
  \label{tab:bho_TRI_EoR}
\end{table*}

\begin{table*}
  \caption{The mean and standard deviation for each found hyperparameter for each method on the TRIX dataset with EoR}
  \centering
  \noindent\resizebox{\textwidth}{!}{
  \begin{tabular}{llllllllll}
    \toprule
    Parameter & batch size & epochs  & learning rate & weight decay & features & layers  & dimensions & step size & EoR \\
    \midrule
    RNI  & $203.30\pm 60.54$ & $264.40\pm 93.60$  & $-2.50\pm 0.44$ & $-6.91\pm 2.52$ & $45.20\pm 22.30$ & $5.70\pm 2.33$ & $3.20\pm 1.08$ & $0.54\pm 0.24$ & $31.60\pm 14.83$ \\
	CLIP & $187.10\pm 73.91$ & $140.70\pm 97.46$  & $-2.94\pm 0.54$ & $-6.43\pm 3.27$ & $49.60\pm 29.86$ & $4.30\pm 1.35$ & $3.80\pm 1.33$ & $0.62\pm 0.32$ & $36.50\pm 16.41$ \\
	RP & $190.20\pm 78.03$ & $182.80\pm 124.46$ & $-3.16\pm 0.68$ & $-5.02\pm 2.93$ & $53.10\pm 27.94$ & $3.80\pm 1.54$ & $4.00\pm 1.18$ & $0.64\pm 0.18$ & $32.20\pm 18.42$ \\
	IRNI & $199.20\pm 57.18$ & $187.90\pm 111.60$ & $-2.60\pm 0.62$ & $-5.01\pm 2.37$ & $52.30\pm 21.19$ & $5.50\pm 2.25$ & $4.20\pm 1.17$ & $0.55\pm 0.27$ & $26.10\pm 16.02$ \\
	\bottomrule
  \end{tabular}}
  \label{tab:bho_TRIX_EoR}
\end{table*}

\begin{table*}
  \caption{The mean and standard deviation for each found hyperparameter for each method on the EXP dataset with EoR}
  \centering
  \noindent\resizebox{\textwidth}{!}{
  \begin{tabular}{llllllllll}
    \toprule
    Parameter & batch size & epochs  & learning rate & weight decay & features & layers  & dimensions & step size & EoR \\
    \midrule
    RNI  & $221.50\pm 53.90$ & $359.50\pm 91.93$  & $-3.35\pm 0.58$ & $-6.44\pm 3.08$ & $63.60\pm 35.13$ & $7.70\pm 1.35$ & $2.50\pm 1.28$ & $0.68\pm 0.35$ & $29.30\pm 20.25$ \\
	CLIP & $196.80\pm 53.20$ & $225.50\pm 93.81$  & $-2.94\pm 0.33$ & $-6.43\pm 2.73$ & $51.70\pm 20.40$ & $4.80\pm 1.60$ & $3.20\pm 1.54$ & $0.49\pm 0.26$ & $25.60\pm 9.79$ \\
	RP & $198.60\pm 72.36$ & $249.90\pm 107.72$ & $-3.30\pm 0.72$ & $-6.20\pm 2.35$ & $53.20\pm 29.50$ & $6.90\pm 1.64$ & $3.40\pm 0.92$ & $0.60\pm 0.30$ & $23.10\pm 16.59$ \\
	IRNI & $165.80\pm 70.00$ & $236.20\pm 120.13$ & $-3.28\pm 0.89$ & $-6.77\pm 1.84$ & $44.70\pm 19.11$ & $7.20\pm 2.09$ & $2.60\pm 1.11$ & $0.48\pm 0.28$ & $20.60\pm 19.02$ \\
	\bottomrule
  \end{tabular}}
  \label{tab:bho_EXP_EoR}
\end{table*}

\begin{table*}
  \caption{The mean and standard deviation for each found hyperparameter for each method on the CEXP dataset with EoR}
  \centering
  \noindent\resizebox{\textwidth}{!}{
  \begin{tabular}{llllllllll}
    \toprule
    Parameter & batch size & epochs  & learning rate & weight decay & features & layers  & dimensions & step size & EoR \\
    \midrule
    RNI & $172.10\pm 66.95$  & $404.80\pm 66.97$  & $-3.78\pm 0.39$ & $-6.29\pm 2.17$ & $77.00\pm 36.91$ & $8.30\pm 1.55$ & $1.30\pm 0.46$ & $0.74\pm 0.24$ & $22.50\pm 19.99$ \\
	CLIP & $193.70\pm 64.15$ & $304.90\pm 121.05$ & $-3.05\pm 0.72$ & $-5.41\pm 2.55$ & $44.90\pm 24.97$ & $7.90\pm 0.94$ & $2.40\pm 1.02$ & $0.70\pm 0.32$ & $23.80\pm 19.20$ \\
	RP & $193.70\pm 79.71$ & $235.20\pm 96.69$  & $-3.57\pm 0.46$ & $-6.56\pm 2.48$ & $68.70\pm 40.74$ & $7.60\pm 1.56$ & $3.20\pm 1.33$ & $0.73\pm 0.26$ & $25.30\pm 13.50$ \\
	IRNI & $190.40\pm 63.72$ & $224.10\pm 155.54$ & $-3.02\pm 0.67$ & $-5.86\pm 3.11$ & $46.60\pm 21.89$ & $8.10\pm 1.58$ & $2.40\pm 1.20$ & $0.53\pm 0.32$ & $16.20\pm 14.04$ \\
	\bottomrule
  \end{tabular}}
  \label{tab:bho_CEXP_EoR}
\end{table*}

\begin{table*}
  \caption{The mean and standard deviation for each found hyperparameter for each method on the CSL dataset with EoR}
  \centering
  \noindent\resizebox{\textwidth}{!}{
  \begin{tabular}{llllllllll}
    \toprule
    Parameter & batch size & epochs  & learning rate & weight decay & features & layers  & dimensions & step size & EoR \\
    \midrule
    RNI  & $164.50\pm 105.35$ & $464.60\pm 102.96$ & $-3.18\pm 0.58$ & $-7.46\pm 2.07$ & $89.70\pm 41.53$ & $9.10\pm 1.04$ & $1.70\pm 1.42$ & $0.58\pm 0.23$ & $50.50\pm 10.13$ \\
	CLIP & $190.70\pm 73.12$  & $366.50\pm 124.09$ & $-2.55\pm 0.45$ & $-7.67\pm 1.34$ & $59.10\pm 29.38$ & $6.70\pm 1.00$ & $3.70\pm 1.49$ & $0.37\pm 0.25$ & $16.20\pm 14.53$ \\
	RP & $187.40\pm 75.99$  & $319.70\pm 92.42$  & $-2.39\pm 0.42$ & $-6.95\pm 2.11$ & $70.80\pm 23.32$ & $7.20\pm 0.98$ & $3.20\pm 1.40$ & $0.54\pm 0.28$ & $11.50\pm 11.60$ \\
	IRNI & $193.40\pm 54.69$  & $217.40\pm 74.28$  & $-2.63\pm 0.45$ & $-6.22\pm 3.23$ & $37.40\pm 14.84$ & $6.20\pm 1.94$ & $3.90\pm 1.04$ & $0.43\pm 0.32$ & $17.10\pm 11.57$ \\
	\bottomrule
  \end{tabular}}
  \label{tab:bho_CSL_EoR}
\end{table*}

\end{document}